\documentclass{article}



\usepackage[utf8]{inputenc} 
\usepackage[T1]{fontenc}    
\usepackage[CJKbookmarks=true,
            bookmarksnumbered=true,
			bookmarksopen=true,
			colorlinks=true,
			citecolor=blue,
			linkcolor=blue,
			anchorcolor=red,
			urlcolor=blue]{hyperref}
\usepackage{url}            
\usepackage{booktabs}       
\usepackage{amsfonts}       
\usepackage{nicefrac}       
\usepackage{microtype}      
\usepackage{xcolor}         
\usepackage[nonatbib,final]{}
\usepackage{titlesec}
\usepackage{amsmath}
\usepackage{amsthm}
\usepackage{bm}
\usepackage{graphicx}
\usepackage{enumitem}
\usepackage{multirow}
\usepackage{graphicx}    
\usepackage{subcaption}  
\usepackage{booktabs}
\usepackage{float}
\usepackage[noend]{algpseudocode}
\usepackage[algorithm]{algorithm}
\usepackage{adjustbox}
\usepackage{wrapfig}
\usepackage{caption}

\usepackage{multicol}
\usepackage{enumitem}
\usepackage{parskip}

\newcommand*\Let[2]{\State #1 $\gets$ #2}
\algrenewcommand\algorithmicrequire{\textbf{Precondition:}}
\algrenewcommand\algorithmicensure{\textbf{Postcondition:}}

\newtheorem{lemma}{Lemma}
\newtheorem{theorem}{Theorem}
\newtheorem{corollary}{Corollary}
\newtheorem{definition}{Definition}[section]

\title{UniGAD: Unifying Multi-level Graph Anomaly Detection}

\author{Yiqing Lin$^{1}$\thanks{Work done as a visiting student at Hong Kong University of Science and Technology.}~, Jianheng Tang$^{2,3}$, Chenyi Zi$^{3}$, H.Vicky Zhao$^{1}$, Yuan Yao$^{2}$, Jia Li$^{2,3}$\thanks{Corresponding Author.}
  \\
  $^{1}$Tsinghua University \\
$^{2}$Hong Kong University of Science and Technology \\
$^{3}$Hong Kong University of Science and Technology (Guangzhou)\\
 \texttt{linyq20@mails.tsinghua.edu.cn, jtangbf@connect.ust.hk, }\\\texttt{barristanzi666@gmail.com, vzhao@tsinghua.edu.cn, \{yuany,jialee\}@ust.hk}}

\begin{document}

\maketitle

\begin{abstract}

Graph Anomaly Detection (GAD) aims to identify uncommon, deviated, or suspicious objects within graph-structured data. Existing methods generally focus on a single graph object type (node, edge, graph, etc.) and often overlook the inherent connections among different object types of graph anomalies. For instance, a money laundering transaction might involve an abnormal account and the broader community it interacts with. To address this, we present UniGAD, the first unified framework for detecting anomalies at node, edge, and graph levels jointly. Specifically, we develop the Maximum Rayleigh Quotient Subgraph Sampler (MRQSampler) that unifies multi-level formats by transferring objects at each level into graph-level tasks on subgraphs. We theoretically prove that MRQSampler maximizes the accumulated spectral energy of subgraphs (i.e., the Rayleigh quotient) to preserve the most significant anomaly information. To further unify multi-level training, we introduce a novel GraphStitch Network to integrate information across different levels, adjust the amount of sharing required at each level, and harmonize conflicting training goals. Comprehensive experiments show that UniGAD outperforms both existing GAD methods specialized for a single task and graph prompt-based approaches for multiple tasks, while also providing robust zero-shot task transferability. All codes can be found at \url{https://github.com/lllyyq1121/UniGAD}.
\end{abstract}

\section{Introduction}

Graph Anomaly Detection (GAD) involves identifying a minority of uncommon graph objects that significantly deviate from the majority within graph-structured data \cite{textbook, akoglu2015graph}. These anomalies can manifest as abnormal nodes, unusual relationships, irregular substructures within the graph, or entire graphs that deviate significantly from others. GAD has many practical applications in various contexts, including the identification of bots and fake news on social media \cite{anand2017anomaly,aimeur2023fake,bondielli2019survey, lin2024maximum}, detection of sensor faults and internet invasions in IoT networks \cite{da2019internet, gaddam2020detecting}, and prevention of fraudsters and money laundering activities in transaction networks \cite{hilal2022financial,weber2019anti}. The mainstream GAD models originate from the Graph Neural Networks (GNNs), which have recently gained popularity for mining graph data \cite{GAT, GCN, GIN, GraphSAGE}.  To address the specific challenges of graph anomalies such as label imbalance \cite{PCGNN, DAGAD}, relation camouflage \cite{CareGNN, liu2020alleviating}, and feature heterophily \cite{BWGNN, GHRN}, numerous adaptations of standard GNNs have been proposed \cite{GAD_timeseries, zheng2019addgraph, GRADATE, MAG, mulgad, dominant, wang2019semi, geniepath, zhang2021fraudre}. 

However, existing GAD approaches typically focus on a single type of graph object, such as node-level or graph-level anomaly detection, often overlooking the inherent correlations between different types of objects in graph-structured data. For example, a money laundering transaction might involve both an abnormal account and the broader community it interacts with, while the specific cancer of a cell is determined by particular proteins or protein complexes within the cell. Although some unsupervised methods fuse information from nodes, edges, and subgraphs through reconstruction \cite{li2017radar, dominant, roy2024gad} or contrastive pre-training \cite{xu2022contrastive, GRADATE, liu2021anomaly}, they are still limited to single-level label supervision or prediction. There is a need for a unified approach that considers these correlations information across different levels and performs multi-level anomaly detection. 

To design a unified model for addressing multi-level GAD, we identify two key challenges:\\
\textbf{1. \textit{How to unify multi-level formats?}} Addressing node-level, edge-level, and graph-level tasks uniformly is challenging due to their inherent differences. Some recent works provide insights into unifying these tasks through the use of large language models (LLMs) or prompt tuning. While some methods leverage the generalization capability of LLMs \cite{liu2023one, wang2024instructgraph, li2023survey} on text-attributed graphs, such semantic information is often unavailable in anomaly detection scenarios due to privacy concerns. On the other hand, graph prompt learning methods \cite{sun2023all, liu2023graphprompt, yu2023multigprompt} design induced $k$-hop graphs to transform node or edge levels into graph-level tasks. Nevertheless, their sampling strategies are not specifically tailored to anomaly data, resulting in inappropriate node selections that `erase' critical anomaly information. This oversight can severely impact the effectiveness of anomaly detection.\\
\textbf{2. \textit{How to unify multi-level training?}} Training a single model for multi-level tasks involves various influencing factors, such as transferring information between different levels and achieving a balanced training of these level tasks. There is limited research on multi-task learning in the graph learning domain. Efforts like ParetoGNN \cite{ju2022multi} employ multiple self-supervised learning objectives (e.g., similarity, mutual information) as independent tasks, but these are insufficient for managing multi-level supervision. A comprehensive approach is needed to effectively integrate and balance the training of different level tasks in multi-level GAD.

In this paper, we propose UniGAD, a unified GAD model that leverages the transferability of information across node-level, edge-level, and graph-level tasks. To address the first challenge, we develop a novel subgraph sampler, \textbf{MRQSampler}, that maximizes accumulated spectral energy (i.e., the Rayleigh quotient) in the sampled subgraph with theoretical guarantee, ensuring that the sampled subgraphs contain the most critical anomaly information from nodes and edges. For the second challenge, we introduce the \textbf{GraphStitch} Network, which unifies multi-level training by integrating separate but identical networks for nodes, edges, and graphs into a unified multi-level model. This is achieved using a novel GraphStitch Unit that facilitates information sharing across different levels while maintaining the effectiveness of individual tasks. We perform comprehensive experiments on 14 GAD datasets and compare 17 state-of-the-art methods covering both node-level and graph-level GAD techniques, as well as prompt-based general multi-task graph learning methods. Results show that UniGAD achieves superior performance and offers robust zero-shot transferability across different tasks.

\begin{figure}[t]
  \centering
  \includegraphics[width=1\linewidth]{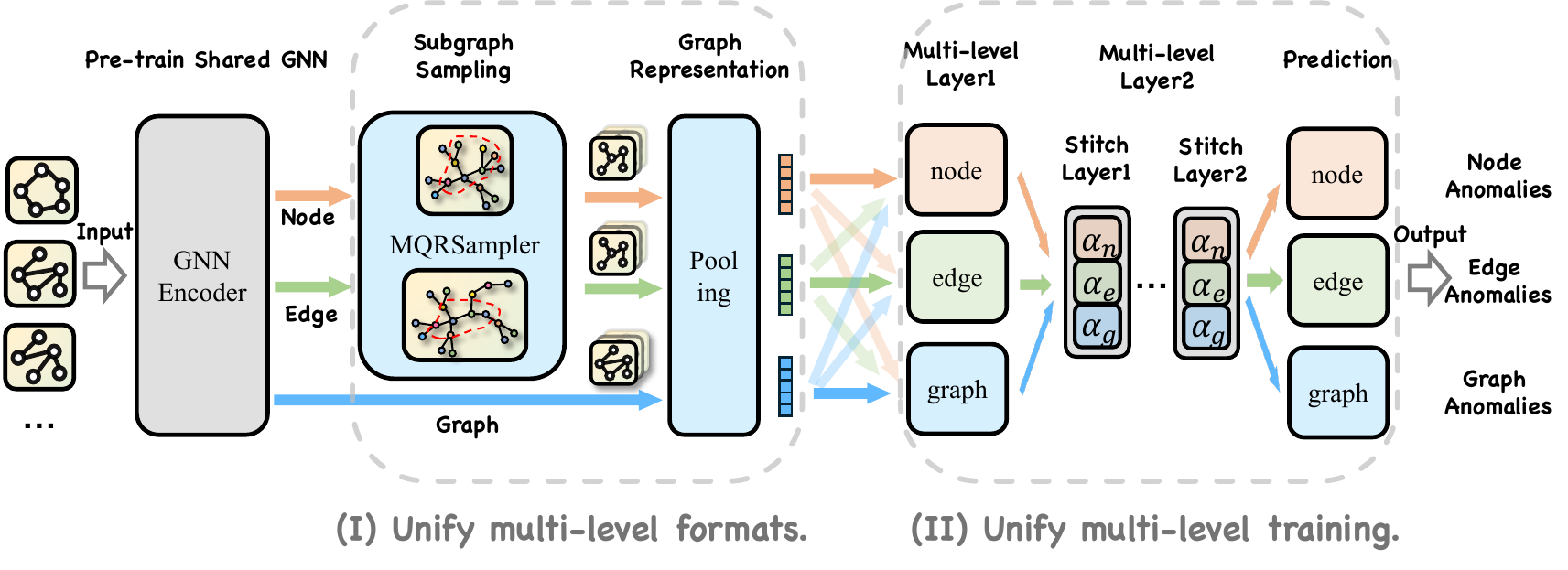}
  \vspace{-5mm}
  \caption{The overall framework of UniGAD.}
  \vspace{-5mm}
  \label{fig:overview}
\end{figure}

\section{Related Work and Preliminaries}

\noindent\textbf{Graph Anomaly Detection.} Leveraging deep learning techniques in GAD has led to significant advancements and a wide range of applications \cite{anand2017anomaly,gaddam2020detecting,aimeur2023fake, bondielli2019survey,hilal2022financial, zhao2023effective}, thoroughly reviewed in a comprehensive survey \cite{ma2021comprehensive}. Node-level anomaly detection, the most prevalent scenario in GAD, has witnessed numerous adaptations and improvements in graph neural networks (GNNs) aimed at enhancing performance from either a spatial \cite{PCGNN, liu2020alleviating, GAS} or spectral \cite{li2019specae, BWGNN, GHRN} perspective. Despite these advancements, recent benchmarks such as BOND \cite{BOND} for unsupervised settings and GADBench \cite{tang2024gadbench} for supervised settings reveal that no single model excels across all datasets, highlighting the need for model selection tailored to specific datasets and task characteristics. For graph-level anomaly detection, various methodologies have been proposed, including transformation learning \cite{OCGIN}, knowledge distillation \cite{GLocalKD}, and evolutionary mapping \cite{GmapAD}. SIGNET \cite{liu2024towards} employs information bottleneck to generate informative subgraphs for explaining graph-level anomalies, while Rayleigh Quotient GNN \cite{dong2023rayleigh} explores the spectral properties of anomalous graphs. Although both node-level and graph-level anomaly detection are rapidly evolving fields, to the best of our knowledge, there is no existing model that supports the joint detection of both node-level and graph-level anomalies.

\noindent\textbf{Multi-task Learning on Graphs.} Multi-task learning involves training a model to handle multiple tasks simultaneously, utilizing shared representations and relationships within the graph to enhance performance across all tasks. Recently, techniques such as graph prompt-based approaches and large language model (LLM)-based approaches have shown promise in this area. Prompt frameworks \cite{zi2024prog} like GraphPrompt \cite{liu2023graphprompt}, All-in-One \cite{sun2023all}, PRODIGY \cite{huang2024prodigy}, MultiGPrompt \cite{yu2023multigprompt}, and SGL-PT \cite{zhu2023sgl} are designed to address a wide array of graph tasks. These approaches transform tasks at other levels into graph-level tasks by leveraging induced graphs. The All-in-One framework enhances connectivity by adding links between the prompt graph and the original graph, whereas GraphPrompt inserts the prompt token into graph nodes through element-wise multiplication. On the other hand, LLM-based frameworks \cite{liu2023one, wang2024instructgraph, li2023survey,chen2024graphwiz,tang2024grapharena} utilize the power of LLMs to learn from different levels, but they require graphs with text attributes or descriptions, which are not applicable in most anomaly detection scenarios. Additionally, some multi-task GNN efforts \cite{ju2022multi} focus on multiple self-supervised specific objectives (such as similarity and mutual information) as independent tasks, which are not suitable for unifying GAD with multi-level label supervision and prediction.

\noindent\textbf{Notation.} 
Let $\mathcal{G} = \{\mathcal{V}, \mathcal{E}, \boldsymbol{X}\}$ denote a connected undirected graph, where $\mathcal{V} = \{v_1, v_2,...,v_N\}$ is the set of $N$ nodes, $\mathcal{E}=\{e_{ij}\}$ is the set of edges, and $\boldsymbol{X} \in \mathbb{R}^{n \times F}$ is node features. Let $\boldsymbol{A}$ be the corresponding adjacency matrix, $\boldsymbol{D}$ be the degree matrix with $\boldsymbol{D}_{i i}=\sum_j \boldsymbol{A}_{i j}$. Laplacian matrix $\boldsymbol{L}$ is then defined as $\boldsymbol{D}-\boldsymbol{A}$ (regular) or as $\boldsymbol{I}-\boldsymbol{D}^{-\frac{1}{2}} \boldsymbol{A} \boldsymbol{D}^{-\frac{1}{2}}$ (normalized), where $\boldsymbol{I}$ is an identity matrix. The Laplacian matrix is a symmetric matrix and can be eigen-decomposed as $\boldsymbol{L}=\boldsymbol{U} \boldsymbol{\Lambda} \boldsymbol{U}^T$, where the diagonal matrix $\boldsymbol{\Lambda}$ consists of real eigenvalues (graph spectrum). Besides, we define the subgraph as $\mathcal{G}_i$ centered on node $v_i$ and our sampled subgraph for node $v_i$ as $\mathcal{S}_i$.

\noindent\textbf{Problem Formulation.} The multi-level graph anomaly detection problem introduces a more universal challenge compared to traditional single-level approaches, described as follows:
\begin{definition}[Multi-level GAD]
Given a training set $\mathcal{T}{r}(\mathcal{N}, \mathcal{E}, \mathcal{G})$ containing nodes, edges, and graphs with arbitrary labels at any of these levels, the goal is to train a unified model to predict anomalies in a test set $\mathcal{T}{e}(\mathcal{N}, \mathcal{E}, \mathcal{G})$, which also contains arbitrary labels at any of these levels.
\end{definition}
Note that our approach does not require the presence of labels at all three levels simultaneously. It is feasible to have labels at one or more levels. Our proposed model aims to leverage the transferability of information across different levels to enhance its predictive capability.

\section{Methodology}
\label{method}

This section details the proposed model UniGAD for multi-level GAD, comprising a GNN encoder, MRQSampler, and GraphStitch Network, as shown in Fig. \ref{fig:overview}. Firstly, a shared pre-trained unsupervised GNN encoder is utilized to learn a more generalized node representation. To unify multi-level formats, the MRQSampler employs spectral sampling to extract subgraphs that contain the highest amount of anomalous information from nodes and edges, thus converting tasks at all three levels into graph-level tasks (Sec. \ref{sample}). To unify multi-level training, the GraphStitch Network integrates information from different levels, adjusts the amount of sharing required at each level, and harmonizes conflicting training goals.  (Sec. \ref{unimodel}).

\subsection{Spectral Subgraph Sampler for Unifying Multi-level Formats}
\label{sample}
In this subsection, we present the Maximum Rayleigh Quotient Subgraph Sampler (MRQSampler), the core module of our unified framework. By sampling subgraphs of nodes or edges, we transform node-level and edge-level tasks into graph-level tasks. Our sampler optimizes these subgraphs to maximize the Rayleigh quotient, ensuring that the sampled subgraphs retain a higher concentration of anomaly information.

\subsubsection{Analysis of the Subgraph Sampling}
\label{samp_motivation}

\titleformat*{\paragraph}{\itshape\bfseries}

\noindent\textbf{What is a suitable subgraph for GAD?} Existing methods on selecting subgraphs for target nodes or edges often use straightforward approaches like $r$-ego or $k$-hop subgraphs \cite{sun2023all}. However, the size of the subgraph is critical for classification outcomes. If the subgraph is too large, it includes too many irrelevant nodes, while if it is too small, it may not align effectively with graph-level tasks.

To measure anomaly information in a subgraph, recent studies \cite{BWGNN, dong2023rayleigh} have identified a `right-shift' phenomenon in the spectral energy distribution, moving from low to higher frequencies. This accumulated spectral energy can be quantified by the Rayleigh quotient \cite{horn2012matrix}:
\begin{equation}
\label{RQ}
RQ(\boldsymbol{x},\boldsymbol{L}) = \frac{\boldsymbol{x}^T \boldsymbol{L} \boldsymbol{x}}{\boldsymbol{x}^T \boldsymbol{x}} =  \frac{\sum_{(i, j) \in \mathcal{E}} A_{ij}(x_j-x_i)^2}{\sum_{i \in \mathcal{V}} x_i^2}.
\end{equation}
The following lemma \cite{BWGNN} illustrates the relationship between the Rayleigh quotient $RQ(\boldsymbol{x},\boldsymbol{L})$ and anomaly information:
\begin{lemma}[Tang, 2022]
    Rayleigh quotient $RQ(\boldsymbol{x},\boldsymbol{L})$, i.e. the accumulated spectral energy of the graph signal, is monotonically increasing with the anomaly degree.
\end{lemma}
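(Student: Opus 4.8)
The plan is to first make the phrase ``anomaly degree'' precise, since the statement as written is qualitative. Following the spirit of Tang (2022), I would posit a generative model for the graph signal, $\boldsymbol{x}(t) = \boldsymbol{s} + t\,\boldsymbol{a}$, in which $\boldsymbol{s}$ is the ``normal'' component --- a smooth / low-frequency signal, in the extreme case a multiple of the all-ones vector, which spans the kernel of $\boldsymbol{L}$ --- and $\boldsymbol{a}$ is an ``anomaly'' component localized on the (few) anomalous nodes, with the scalar $t \ge 0$ serving as the anomaly degree. (An alternative is to measure anomaly degree by the number $m$ of perturbed nodes; I would treat the continuous version first and then indicate the discrete one.)

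With this notation, $RQ(\boldsymbol{x}(t),\boldsymbol{L}) = \dfrac{\boldsymbol{s}^{T}\boldsymbol{L}\boldsymbol{s} + 2t\,\boldsymbol{s}^{T}\boldsymbol{L}\boldsymbol{a} + t^{2}\,\boldsymbol{a}^{T}\boldsymbol{L}\boldsymbol{a}}{\|\boldsymbol{s}\|^{2} + 2t\,\boldsymbol{s}^{T}\boldsymbol{a} + t^{2}\|\boldsymbol{a}\|^{2}}$. The cleanest route is to first impose the natural orthogonality conditions $\boldsymbol{s}^{T}\boldsymbol{a} = 0$ and $\boldsymbol{s}^{T}\boldsymbol{L}\boldsymbol{a} = 0$ (both automatic when $\boldsymbol{s}$ is constant, since then $\boldsymbol{L}\boldsymbol{s} = \boldsymbol{0}$). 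The quotient becomes $RQ(t) = \dfrac{\alpha + \beta t^{2}}{\gamma + \delta t^{2}}$ with $\alpha = \boldsymbol{s}^{T}\boldsymbol{L}\boldsymbol{s} \ge 0$, $\beta = \boldsymbol{a}^{T}\boldsymbol{L}\boldsymbol{a} > 0$, $\gamma = \|\boldsymbol{s}\|^{2} > 0$, $\delta = \|\boldsymbol{a}\|^{2} > 0$, and differentiating gives $RQ'(t) = \dfrac{2t\,(\beta\gamma - \alpha\delta)}{(\gamma + \delta t^{2})^{2}}$, which is strictly positive for $t > 0$ exactly when $\beta/\delta > \alpha/\gamma$, i.e.\ when the anomaly component has a strictly larger Rayleigh quotient than the normal component --- precisely the ``right-shift'' characterization of anomalies. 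Hence $RQ(\boldsymbol{x}(t),\boldsymbol{L})$ increases monotonically in $t$, rising from $RQ(\boldsymbol{s},\boldsymbol{L})$ toward $RQ(\boldsymbol{a},\boldsymbol{L})$ as $t \to \infty$.

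For the discrete, ``number of anomalies'' parametrization, I would argue one anomaly at a time: turning an extra node anomalous perturbs $\boldsymbol{x}$ by a vector supported at that node, and I would compare $RQ$ before and after via the edge form \eqref{RQ}, $RQ = \frac{\sum_{(i,j)\in\mathcal E}A_{ij}(x_j - x_i)^2}{\sum_i x_i^2}$. A newly anomalous node sharply disagrees with its still-normal neighbours, so it injects several large squared edge differences into the numerator while contributing only a controlled amount to the denominator; bounding the two effects shows the ratio increases, and summing over the $m$ anomalies gives monotonicity in $m$.

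The main obstacle is the linear cross term. Without the orthogonality assumptions, $RQ'(0) = \dfrac{2\big(\boldsymbol{s}^{T}\boldsymbol{L}\boldsymbol{a}\,\|\boldsymbol{s}\|^{2} - \boldsymbol{s}^{T}\boldsymbol{L}\boldsymbol{s}\,\boldsymbol{s}^{T}\boldsymbol{a}\big)}{\|\boldsymbol{s}\|^{4}}$ need not be positive, so strict monotonicity can fail for small $t$. A faithful proof must therefore either (i) derive the orthogonality conditions from the modelling assumptions --- the normal signal lying in (or near) the low-frequency subspace, the anomaly being a localized, essentially mean-zero deviation --- or (ii) pass to an expectation over a random sign or random placement of $\boldsymbol{a}$, so that the odd-order-in-$t$ terms average to zero and only the even-order terms survive, reproducing the clean computation above. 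Pinning down the exact hypotheses on $\boldsymbol{s}$ and $\boldsymbol{a}$ under which ``anomaly degree'' is well-defined, and justifying this reduction, is where the real work lies; once it is in place, the monotonicity itself is an elementary one-variable calculus fact.
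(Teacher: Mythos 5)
This lemma is not proved anywhere in the paper: it is imported verbatim from the BWGNN paper (Tang et al., 2022), so there is no in-paper proof to compare against. Your reconstruction is nonetheless faithful to what the cited source actually does. There, ``anomaly degree'' is made precise through a randomized perturbation model and the right-shift of spectral energy is established \emph{in expectation}, which is exactly your route (ii): the odd-in-$t$ cross terms vanish under the averaging, leaving the even-order computation you carry out. Your one-variable calculus is correct, and your diagnosis of the obstruction is the right one --- without either orthogonality of the anomaly component to the smooth component or an expectation over random placement/sign, $RQ'(0)$ can have either sign and pointwise monotonicity genuinely fails. In other words, the lemma as quoted is a statement about a specific stochastic anomaly model, not an unconditional fact, and you correctly identified that pinning down those hypotheses is where the content lies.

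The one substantive point to push back on is your discrete sketch. The claim that a newly anomalous node ``injects several large squared edge differences into the numerator while contributing only a controlled amount to the denominator'' is not automatic: a node whose feature is made anomalous by taking a very large value contributes $x_{new}^2$ to the denominator, which grows at the same quadratic rate as the new edge terms $(x_{new}-x_r)^2$ in the numerator, so the ratio need not increase. Whether it does is precisely the condition $\Delta(v_{new}) > RQ(\mathcal{S})$ of the paper's Theorem~\ref{theorem1}, and that condition can fail. So the ``one anomaly at a time'' argument requires the same kind of quantitative hypothesis as the continuous version; it cannot be settled by a generic bounding argument.
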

Thus, for any node $v_i$, our sampling objective is to identify the induced subgraph with the highest Rayleigh quotient containing the most anomaly information.

\noindent\textbf{Where to Sample Subgraph From?} To preserve the properties of target nodes, it is essential to sample subgraphs centered around these nodes, capturing key surrounding nodes. The most intuitive methods are $r$-ego graphs or $k$-hop graphs. However, considering the message-passing mechanisms of most GNNs \cite{GCN, wu2019simplifying, GraphSAGE}, a classical work \cite{GIN} provides valuable insight:
\begin{lemma}[Xu, 2018]
    A GNN recursively updates each node's feature vector through its rooted subtree structures to capture the network structure and features of surrounding nodes. 
\end{lemma}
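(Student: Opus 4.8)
The plan is to prove the statement by induction on the number of message-passing layers, after first making the informal phrase ``rooted subtree structure'' precise as the \emph{unrolled computation tree} $T_v^{(k)}$ rooted at $v$ (equivalently, the depth-$k$ Weisfeiler--Lehman tree): $T_v^{(0)}$ is the single node $v$ carrying input feature $\boldsymbol{x}_v$, and $T_v^{(k)}$ is built by attaching to the root $v$ one copy of $T_u^{(k-1)}$ for every neighbour $u\in N(v)$ (so the same physical node of $\mathcal{G}$ can occur many times in $T_v^{(k)}$ when $\mathcal{G}$ has cycles). Writing a generic message-passing layer as $\boldsymbol{h}_v^{(k)}=\mathrm{COMBINE}^{(k)}\!\big(\boldsymbol{h}_v^{(k-1)},\mathrm{AGGREGATE}^{(k)}(\{\!\{\boldsymbol{h}_u^{(k-1)}:u\in N(v)\}\!\})\big)$ with $\boldsymbol{h}_v^{(0)}=\boldsymbol{x}_v$, the claim to establish is: $\boldsymbol{h}_v^{(k)}$ is a function of $T_v^{(k)}$ alone, and hence a $k$-layer GNN ``sees'' at $v$ exactly the structure and input features recorded in $T_v^{(k)}$, i.e. the $k$-hop rooted neighbourhood of $v$.

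First I would treat the base case $k=0$, where $\boldsymbol{h}_v^{(0)}=\boldsymbol{x}_v$ is by definition determined by $T_v^{(0)}$. For the inductive step, assume $\boldsymbol{h}_u^{(k-1)}$ is a function of $T_u^{(k-1)}$ for every node $u$. The update rule writes $\boldsymbol{h}_v^{(k)}$ as a function of the pair $\big(\boldsymbol{h}_v^{(k-1)},\ \{\!\{\boldsymbol{h}_u^{(k-1)}:u\in N(v)\}\!\}\big)$; by the inductive hypothesis this pair is itself a function of $\big(T_v^{(k-1)},\ \{\!\{T_u^{(k-1)}:u\in N(v)\}\!\}\big)$, which is precisely the data that defines $T_v^{(k)}$ (root subtree together with the multiset of children subtrees). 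Composing, $\boldsymbol{h}_v^{(k)}$ is a function of $T_v^{(k)}$, closing the induction. To obtain the ``captures the network structure and surrounding features'' reading, I would note that $T_v^{(k)}$ enumerates, with multiplicity, every walk of length at most $k$ out of $v$ together with the input features encountered along it, so the $k$-hop neighbourhood of $v$ and its connectivity pattern are all reflected in $T_v^{(k)}$ and therefore accessible to the GNN; conversely, if $\mathrm{AGGREGATE}^{(k)}$ and $\mathrm{COMBINE}^{(k)}$ are injective (the GIN choice), the same induction run backwards shows $\boldsymbol{h}_v^{(k)}$ in fact \emph{determines} $T_v^{(k)}$, so no information about the rooted subtree is lost.

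The step I expect to require the most care is the formalization itself rather than any computation: one must pin down $T_v^{(k)}$ as the unrolled tree (not the induced subgraph on the $k$-hop ball), be explicit that message passing on a cyclic graph still only ever accesses this tree unrolling, and keep the aggregation multiset-valued so that the correspondence between $\boldsymbol{h}_v^{(k)}$ and $\big(T_v^{(k-1)},\{\!\{T_u^{(k-1)}\}\!\}\big)$ is exact. Once that bookkeeping is fixed, the induction is routine, and the result is exactly the structural fact underlying Xu et al.'s analysis of GNN expressiveness that we invoke here.
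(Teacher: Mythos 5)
The paper does not prove this lemma --- it is quoted as a known result from Xu et al.\ (2018) and used purely as motivation for sampling from rooted subtrees rather than $k$-hop induced subgraphs. Your formalization via the unrolled computation tree $T_v^{(k)}$ and the induction on message-passing layers is exactly the standard argument underlying the cited result, and it is correct, including the care you take to distinguish the unrolled tree from the induced $k$-hop subgraph and to keep the aggregation multiset-valued.
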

\begin{figure}[t]
  \centering
  \includegraphics[width=0.8\linewidth]{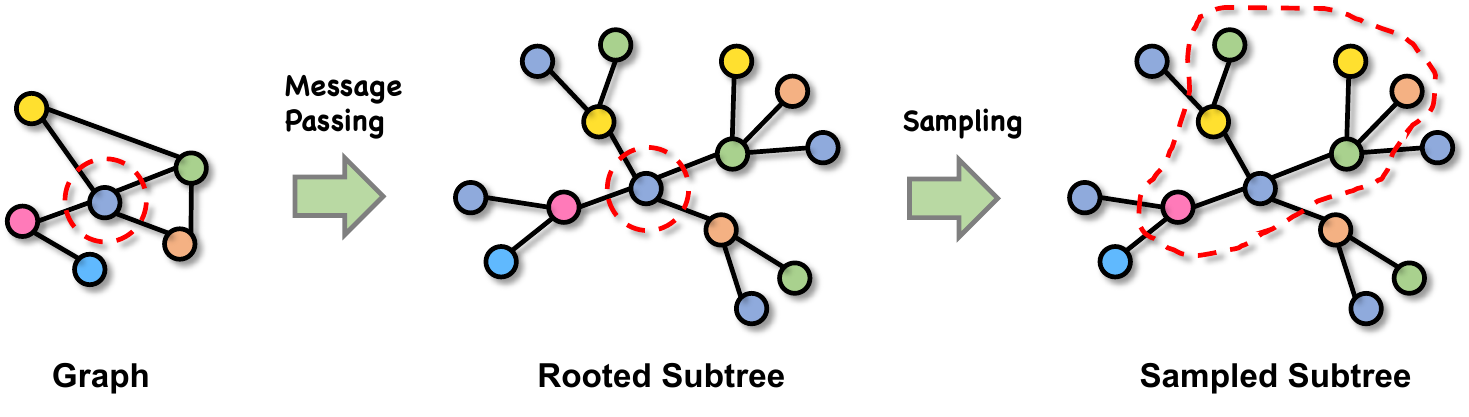}
  \caption{Message passing in GNNs and rooted subtree sampling.}
  \label{fig:subtree}
\end{figure}
As shown in Fig. \ref{fig:subtree}, the message-passing process of GNNs suggests that a rooted subtree centered on the target node is more consistent with the GNN's architecture. Therefore, we sample subgraphs from these rooted subtree structures. The remaining question is: \textbf{\textit{How to implement subgraph sampling based on the above?}} To address this, we introduce a novel MRQSampler in the next subsection.

\titleformat*{\paragraph}{\normalfont\normalsize\bfseries}

\subsubsection{Maximum Rayleigh Quotient Subgraph Sampler (MRQSampler)}
\label{MRQSampler}

Building on the motivation in Section \ref{samp_motivation}, our approach involves sampling subgraphs for each node starting from the rooted subtree with the node as its root. The target node is always included. We then select the subtree with the maximum Rayleigh quotient from all possible subtrees as the representative subgraph for that node to ensure it contain the maximum anomaly information. We formulate this as the following optimization problem:
\begin{equation}
\begin{aligned}
    \mathcal{S}^\star  = \mathop{\arg\max}_{{\mathcal{S}} \subseteq \mathcal{G}} \quad &  \frac{\sum_{(p, q) \in \mathcal{E}_{\mathcal{S}}} (x_p-x_q)^2}{\sum_{p \in \mathcal{S}} x_p^2}, \\
    \text{s.t.} \quad \quad & v \in \mathcal{S},
     \\ & \forall v_p \in \mathcal{S}, \ (v, v_p ) \  \text{is accessible.}
\end{aligned}
\end{equation}
where $\mathcal{G}$ represents $k$-depth rooted subtree from $v$, and $\mathcal{S}$ is a possible subgraph from $\mathcal{G}$. The first constraint ensures the target node is included, and the second constraint ensures message passability. Generally, similar selecting subgraphs in this manner is considered an NP-Hard problem \cite{yangraye}. However, leveraging the properties of trees, we propose an algorithm to solve \textbf{the optimal solution}.

We first determine the conditions that increase a subgraph's Rayleigh quotient when adding a node, presented in the following theorem:
\begin{theorem}
\label{theorem1}
For a graph $\mathcal{G}$, let one of its subgraphs be $\mathcal{S}$, and let its Rayleigh quotient be $RQ(\mathcal{S})$. If a new node $v_{new} \in \mathcal{G}-\mathcal{S}$ is added to $\mathcal{S}$, the Rayleigh quotient $RQ(\mathcal{S})$ will increase if and only if:
\begin{equation}
\label{proposition1}
\Delta(v_{new}) = \frac{\sum_{ v_r \in \mathcal{S}} (x_{new}-x_r)^2}{x_{new}^2} > RQ(\mathcal{S}).
\end{equation}
\end{theorem}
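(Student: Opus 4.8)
The plan is to prove the equivalence by a direct algebraic comparison of the two Rayleigh quotients, using only the elementary ``mediant'' inequality for positive fractions. Write $RQ(\mathcal{S}) = N_{\mathcal{S}}/D_{\mathcal{S}}$ with $N_{\mathcal{S}} = \sum_{(p,q)\in\mathcal{E}_{\mathcal{S}}} A_{pq}(x_p-x_q)^2$ and $D_{\mathcal{S}} = \sum_{p\in\mathcal{S}} x_p^2$, both strictly positive whenever the signal restricted to $\mathcal{S}$ is not identically zero. Adding $v_{new}\in\mathcal{G}-\mathcal{S}$ to $\mathcal{S}$ changes the denominator to $D_{\mathcal{S}} + x_{new}^2$ and the numerator to $N_{\mathcal{S}} + \delta N$, where $\delta N = \sum_{v_r \in \mathcal{S}} A_{v_{new}v_r}(x_{new}-x_r)^2$ collects exactly the edges $v_{new}$ contributes to the induced subgraph; note that $\delta N$ is precisely the numerator of $\Delta(v_{new})$ once each term is weighted by $A_{v_{new}v_r}$ (which is $1$ for a neighbour of $v_{new}$ in $\mathcal{S}$ and $0$ otherwise — in the rooted-subtree setting of Section~\ref{MRQSampler} this leaves only the single parent edge).

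Next I would carry out the comparison itself. We wish to show $RQ(\mathcal{S}\cup\{v_{new}\}) > RQ(\mathcal{S})$, i.e.
\[ \frac{N_{\mathcal{S}} + \delta N}{D_{\mathcal{S}} + x_{new}^2} > \frac{N_{\mathcal{S}}}{D_{\mathcal{S}}}. \]
Since $D_{\mathcal{S}}>0$ and $D_{\mathcal{S}} + x_{new}^2>0$, cross-multiplying is an \emph{equivalence}; cancelling the common term $N_{\mathcal{S}}D_{\mathcal{S}}$ reduces it to $\delta N\cdot D_{\mathcal{S}} > N_{\mathcal{S}}\cdot x_{new}^2$, and dividing by $D_{\mathcal{S}}\,x_{new}^2>0$ turns it into $\delta N / x_{new}^2 > N_{\mathcal{S}}/D_{\mathcal{S}} = RQ(\mathcal{S})$, which is exactly $\Delta(v_{new}) > RQ(\mathcal{S})$. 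Reading this chain of equivalences in reverse gives the ``only if'' direction, so both implications fall out of the same one-line computation.

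The argument is short, so there is no serious obstacle — the only care needed is with the side conditions: one requires $x_{new}\neq 0$ for $\Delta(v_{new})$ to be defined and $D_{\mathcal{S}}>0$ for $RQ(\mathcal{S})$ to be defined, both of which hold under the standing assumption that the graph signal is nonzero on the relevant vertex sets; and one must identify $\delta N$ correctly as the sum over edges incident to $v_{new}$ inside $\mathcal{S}\cup\{v_{new}\}$, which is where the accessibility constraint is implicitly used to guarantee $v_{new}$ has at least one such edge so that the update is genuine. Looking ahead, this pointwise criterion is exactly what will let the subsequent algorithm decide, greedily and node by node, which branches of the rooted subtree to retain.
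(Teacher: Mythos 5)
Your proof is correct and follows essentially the same route as the paper's: both reduce the inequality $RQ(\mathcal{S}\cup\{v_{new}\})>RQ(\mathcal{S})$ to a cross-multiplication of positive fractions, cancel the common term $N_{\mathcal{S}}D_{\mathcal{S}}$, and divide to obtain $\Delta(v_{new})>RQ(\mathcal{S})$ as an equivalence. Your explicit attention to the side conditions ($x_{new}\neq 0$, $D_{\mathcal{S}}>0$) and to the adjacency weighting in $\delta N$ is a slight tightening of the paper's presentation, but the argument is the same.
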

The proof of Theorem \ref{theorem1} can be found in Appendix \ref{proof1}. We can extend this theorem from a single new node $v_{new}$ to a new node set $\mathcal{V}_{new}$, leading to the following corollary:
\begin{corollary}
\label{corollary11}
For a graph $\mathcal{G}$, let one of its subgraphs be $\mathcal{S}$, and let its Rayleigh quotient be $RQ(\mathcal{S})$. If a new nodeset $\mathcal{V}{new} \subset \mathcal{G} -\mathcal{S}$ is added to $\mathcal{S}$, the Rayleigh quotient $RQ(\mathcal{S})$ will increase if and only if:

\begin{equation}
\label{corollary1}
\! \! \!\!\!\! \Delta(\mathcal{V}_{new}) = \frac{\sum_{ (i,r)\in \mathcal{E}_{  \mathcal{S} + \mathcal{V}_{new} } } (x_{new_{i}}-x_r)^2+\sum_{(i, j) \in \mathcal{E}_{\mathcal{V}_{new}}} (x_{new_i}-x_{new_j})^2}{\sum_{v_{new} \in \mathcal{V}_{new}} x_{new}^2} > RQ(\mathcal{S}).
\end{equation}

\end{corollary}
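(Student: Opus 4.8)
The plan is to reduce the statement to an elementary \emph{mediant} inequality, reusing the bookkeeping behind Theorem~\ref{theorem1} but now tracking the joint effect of all new nodes and all new edges simultaneously, rather than iterating the single-node result. First I would write $RQ(\mathcal{S}) = a/b$ with $a = \sum_{(i,j)\in\mathcal{E}_{\mathcal{S}}}(x_i-x_j)^2$ and $b = \sum_{i\in\mathcal{S}} x_i^2 > 0$. When the nodeset $\mathcal{V}_{new}$ is attached to $\mathcal{S}$, the vertex set of the enlarged subgraph is $\mathcal{S}\cup\mathcal{V}_{new}$ and its edge set is the disjoint union of three parts: the original edges $\mathcal{E}_{\mathcal{S}}$, the cut edges joining $\mathcal{S}$ to $\mathcal{V}_{new}$, and the internal edges $\mathcal{E}_{\mathcal{V}_{new}}$ among the new nodes. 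Consequently the Rayleigh quotient of $\mathcal{S}+\mathcal{V}_{new}$ is exactly $(a+\Delta a)/(b+\Delta b)$, where $\Delta b = \sum_{v_{new}\in\mathcal{V}_{new}} x_{new}^2$ is the new denominator mass and $\Delta a$ is precisely the numerator appearing in the definition of $\Delta(\mathcal{V}_{new})$ in \eqref{corollary1}, namely the cut-edge contribution $\sum_{(i,r)\in\mathcal{E}_{\mathcal{S}+\mathcal{V}_{new}}}(x_{new_i}-x_r)^2$ plus the internal-edge contribution $\sum_{(i,j)\in\mathcal{E}_{\mathcal{V}_{new}}}(x_{new_i}-x_{new_j})^2$.

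The key step is then the purely algebraic observation that, for $b>0$ and $\Delta b>0$,
\begin{equation*}
\frac{a+\Delta a}{b+\Delta b} > \frac{a}{b}
\iff b\,(a+\Delta a) > a\,(b+\Delta b)
\iff b\,\Delta a > a\,\Delta b
\iff \frac{\Delta a}{\Delta b} > \frac{a}{b}.
\end{equation*}
Since $\Delta a/\Delta b = \Delta(\mathcal{V}_{new})$ and $a/b = RQ(\mathcal{S})$, this chain of equivalences is exactly the claim: $RQ$ strictly increases under the addition of $\mathcal{V}_{new}$ if and only if $\Delta(\mathcal{V}_{new}) > RQ(\mathcal{S})$. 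As a sanity check I would note that specializing to $|\mathcal{V}_{new}| = 1$ makes $\mathcal{E}_{\mathcal{V}_{new}}$ empty and recovers the condition of Theorem~\ref{theorem1}. I would present the argument in this aggregate form deliberately: adding the new nodes one at a time would change the reference quotient at every step, so the per-node threshold from Theorem~\ref{theorem1} does not chain cleanly, whereas the combined computation bypasses this issue entirely.

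The hard part will not be the inequality itself — that is a one-liner — but the edge bookkeeping that justifies the decomposition $(a,b)\mapsto(a+\Delta a,b+\Delta b)$: one must verify that every edge of $\mathcal{S}+\mathcal{V}_{new}$ is counted exactly once, in particular that the cut term and the internal term are disjoint and together exhaust the edges incident to $\mathcal{V}_{new}$, with no overlap with $\mathcal{E}_{\mathcal{S}}$. I would also flag the mild degeneracy $\Delta b = 0$, i.e.\ all new signal values vanish: this case should be excluded (or noted separately) since then the quotient is either unchanged or the ratio $\Delta a/\Delta b$ is undefined; in the generic situation $\Delta b>0$ holds as soon as some $x_{new}\neq 0$, and the equivalence above applies verbatim.
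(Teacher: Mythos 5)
Your proposal is correct and follows essentially the same route as the paper's Appendix proof: both expand $RQ(\mathcal{S}+\mathcal{V}_{new})$ as a mediant $(a+\Delta a)/(b+\Delta b)$, cross-multiply using positivity of the denominators, cancel the common $ab$ term, and rearrange to the stated threshold condition. Your explicit attention to the disjoint edge decomposition and the degenerate case $\Delta b=0$ is slightly more careful than the paper's write-up, but the underlying argument is identical.
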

The proof details are also in Appendix \ref{proof1c}. While the above analysis can indeed increase the Rayleigh quotient of the sampled subgraph, the sampling order may cause the results to fall into a local optimum, which may not guarantee a globally optimal solution. To identify the nodes that must be sampled in the optimal subgraph, we present the following theorem:
\begin{theorem}
\label{corollary2}
    For a graph $\mathcal{G}$, let one of its subgraph be $\mathcal{S}$, the $\mathcal{S}^*$ be its final optimal subgraph, and $\mathcal{S} 
    \subset \mathcal{S}^*$. For a new \textbf{connected} nodeset $\tilde{\mathcal{V}}_{new} \cap S = \emptyset $, it is contained in $\mathcal{S}^*$ when it satisfies:
\begin{equation}
\label{eq:new5}
\Delta_{max}(\tilde{\mathcal{V}}_{new}) = \max_{\tilde{\mathcal{V}}_{new} \subseteq \mathcal{G}-\mathcal{S}}\Delta(\tilde{\mathcal{V}}_{new}), \ \text{and} \
\Delta_{max}(\tilde{\mathcal{V}}_{new}) > RQ(\mathcal{S}).
\end{equation}
\end{theorem}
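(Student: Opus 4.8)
The plan is a proof by contradiction built on an exchange argument, with two ingredients. The first is Corollary~\ref{corollary11} itself: attaching a node set to a subgraph raises its Rayleigh quotient precisely when that set's $\Delta$ exceeds the subgraph's current $RQ$. The second is the elementary ``mediant'' fact: if $b,d>0$ and $\tfrac{c}{d}>\tfrac{a}{b}$, then $\tfrac{a}{b}<\tfrac{a+c}{b+d}<\tfrac{c}{d}$, and more generally $\tfrac{\sum_i c_i}{\sum_i d_i}$ always lies between $\min_i\tfrac{c_i}{d_i}$ and $\max_i\tfrac{c_i}{d_i}$. Every quantity appearing here --- each $RQ(\cdot)$ and each $\Delta(\cdot\,;\mathrm{base})$ (I annotate the base set, which the paper leaves implicit) --- is such a ratio whose numerator and denominator are additive over node sets that are pairwise non-adjacent in the tree $\mathcal{G}$, so attaching a node set replaces the ratio by the mediant of the old ratio and that set's $\Delta$. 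I would first record the mediant fact and then use it repeatedly.

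Step~1: reduce the whole claim to the single numerical inequality $RQ(\mathcal{S}^\star)<\Delta_{max}(\tilde{\mathcal{V}}_{new})$. Put $R=\mathcal{S}^\star\setminus\mathcal{S}$. Since $\mathcal{G}$ is a rooted subtree and $\mathcal{S}\subset\mathcal{S}^\star$ are connected, $R$ decomposes into pairwise non-adjacent connected branches $R_1,\dots,R_m\subseteq\mathcal{G}-\mathcal{S}$, each attached to $\mathcal{S}$. By the defining maximality of $\Delta_{max}$ over connected subsets of $\mathcal{G}-\mathcal{S}$, every $\Delta(R_i;\mathcal{S})\le\Delta_{max}$, so $\Delta(R;\mathcal{S})$ --- a mediant of the $\Delta(R_i;\mathcal{S})$ --- satisfies $\Delta(R;\mathcal{S})\le\Delta_{max}$. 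On the other hand $\Delta_{max}>RQ(\mathcal{S})$ means, by Corollary~\ref{corollary11}, that attaching $\tilde{\mathcal{V}}_{new}$ to $\mathcal{S}$ strictly raises $RQ$; since $\mathcal{S}\cup\tilde{\mathcal{V}}_{new}$ is a legal subgraph, $RQ(\mathcal{S})<RQ(\mathcal{S}\cup\tilde{\mathcal{V}}_{new})\le RQ(\mathcal{S}^\star)$, so in particular $R\ne\emptyset$ and $\Delta(R;\mathcal{S})>RQ(\mathcal{S})$. Viewing $RQ(\mathcal{S}^\star)$ as the mediant of $RQ(\mathcal{S})$ and $\Delta(R;\mathcal{S})$ then yields $RQ(\mathcal{S})<RQ(\mathcal{S}^\star)<\Delta(R;\mathcal{S})\le\Delta_{max}$.

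Step~2: the exchange. Assume toward a contradiction that $\tilde{\mathcal{V}}_{new}\not\subseteq\mathcal{S}^\star$, and set $W=\tilde{\mathcal{V}}_{new}\cap\mathcal{S}^\star$ and $Z=\tilde{\mathcal{V}}_{new}\setminus\mathcal{S}^\star\ne\emptyset$. Because an intersection of subtrees is a subtree, $W$ is connected (possibly empty). Carrying out the attachment of $\tilde{\mathcal{V}}_{new}$ to $\mathcal{S}$ in the order ``first $W$, then $Z$'' exhibits $\Delta_{max}=\Delta(\tilde{\mathcal{V}}_{new};\mathcal{S})$ as the mediant of $\Delta(W;\mathcal{S})$ and $\Delta(Z;\mathcal{S}\cup W)$; since $W$ is connected we have $\Delta(W;\mathcal{S})\le\Delta_{max}$, and a mediant cannot lie below both of its parents, forcing $\Delta(Z;\mathcal{S}\cup W)\ge\Delta_{max}$ (if $W=\emptyset$ this is immediate). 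Enlarging the base from $\mathcal{S}\cup W$ to $\mathcal{S}^\star\supseteq\mathcal{S}\cup W$ only adds edges from $Z$ to the base and leaves $\sum_{p\in Z}x_p^2$ unchanged, so $\Delta(Z;\mathcal{S}^\star)\ge\Delta(Z;\mathcal{S}\cup W)\ge\Delta_{max}>RQ(\mathcal{S}^\star)$ by Step~1. Now Corollary~\ref{corollary11}, applied with base $\mathcal{S}^\star$ and the set $Z$ (disjoint from $\mathcal{S}^\star$ and accessible, since $\tilde{\mathcal{V}}_{new}$ is connected and reaches $\mathcal{S}\subseteq\mathcal{S}^\star$), gives $RQ(\mathcal{S}^\star\cup Z)>RQ(\mathcal{S}^\star)$; but $\mathcal{S}^\star\cup Z=\mathcal{S}^\star\cup\tilde{\mathcal{V}}_{new}$ is again a connected subgraph containing $v$, contradicting the optimality of $\mathcal{S}^\star$. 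Hence $Z=\emptyset$, i.e. $\tilde{\mathcal{V}}_{new}\subseteq\mathcal{S}^\star$.

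The hard part is Step~1. The hypothesis only controls $\Delta_{max}$ relative to $RQ(\mathcal{S})$, whereas the contradiction in Step~2 needs it controlled relative to the (generally strictly larger) $RQ(\mathcal{S}^\star)$; closing this gap is exactly where the tree structure of $\mathcal{G}$ is essential --- it lets $\mathcal{S}^\star\setminus\mathcal{S}$ be split into non-adjacent connected branches so the additive/mediant bookkeeping is exact, and it makes $\tilde{\mathcal{V}}_{new}\cap\mathcal{S}^\star$ connected so that $\Delta_{max}$ dominates it. The remainder is routine care: treating the $W=\emptyset$ case, checking that each invocation of Corollary~\ref{corollary11} is to a node set disjoint from and accessible to its base (so the union is a legal subgraph), and noting that the denominators $\sum x_p^2$ are positive, which the ambient setting already presumes.
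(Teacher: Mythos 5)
Your proof is correct, and it rests on exactly the two ingredients the paper uses: Corollary \ref{corollary11} as the ``does adding this set raise $RQ$?'' test, and the mediant inequality (the paper's Lemma \ref{lemma3}, used to pass from $\Delta_{max}(\tilde{\mathcal{V}}_{new}) > RQ(\mathcal{S})$ and $\Delta_{max}(\tilde{\mathcal{V}}_{new}) \geq \Delta(\cdot)$ to $\Delta_{max}(\tilde{\mathcal{V}}_{new}) > RQ$ of the enlarged subgraph), wrapped in a proof by contradiction. The organization differs in one substantive way. The paper's contradiction argument posits that the optimum is $\mathcal{S}+\mathcal{V}_{new}'$ for some $\mathcal{V}_{new}'$ and applies Corollary \ref{corollary11} to add $\tilde{\mathcal{V}}_{new}$ on top of it --- which tacitly assumes $\tilde{\mathcal{V}}_{new}$ is entirely disjoint from the optimal subgraph; the case where $\tilde{\mathcal{V}}_{new}$ partially overlaps $\mathcal{S}^*$ is not addressed (the stated side condition $\mathcal{V}_{new}' \cap \mathcal{S}^* = \emptyset$ there is evidently a typo). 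Your Step~2 closes exactly this gap: the $W/Z$ split, the observation that $W$ is connected (so $\Delta(W;\mathcal{S}) \leq \Delta_{max}$ and the mediant forces $\Delta(Z;\mathcal{S}\cup W) \geq \Delta_{max}$), and the monotonicity of $\Delta(Z;\cdot)$ under enlarging the base make the exchange legitimate when only part of $\tilde{\mathcal{V}}_{new}$ is missing from $\mathcal{S}^*$. Your Step~1 (bounding $RQ(\mathcal{S}^*)$ by $\Delta_{max}$ via the branch decomposition of $\mathcal{S}^*\setminus\mathcal{S}$) plays the same role as the paper's Eq.~(\ref{eq:A14}) plus its closing decomposition of arbitrary nodesets into connected components, just arranged in the opposite order. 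Net effect: same machinery, but your version is the more complete one; the extra care about overlap and about annotating the base set of each $\Delta$ is worth keeping.
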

We refer readers to Appendix \ref{proof2} for the rigorous proof. Through the above analysis, we derive the conditions of the nodeset contained in the optimal subtree (Theorem \ref{corollary2}). When $ \tilde{\mathcal{V}}_{new} $ satisfies Eq. (\ref{eq:new5}), it always increases the Rayleigh quotient based on the current subgraph, ensuring that $ \mathcal{V}_{new} $ is contained in the optimal solution. Thus, we decouple the problem of finding the subgraph with the maximum Rayleigh quotient into a process of finding the maximum $\Delta_{max}(\mathcal{V}_{new})$ each time, until adding any node/node set fails to increase the $RQ(\mathcal{S})$. 
Following this, we design a dynamic programming (DP) algorithm to ensure the optimal subset satisfies these conditions.
\begin{figure}[t]
\centering
\includegraphics[width=0.99\linewidth]{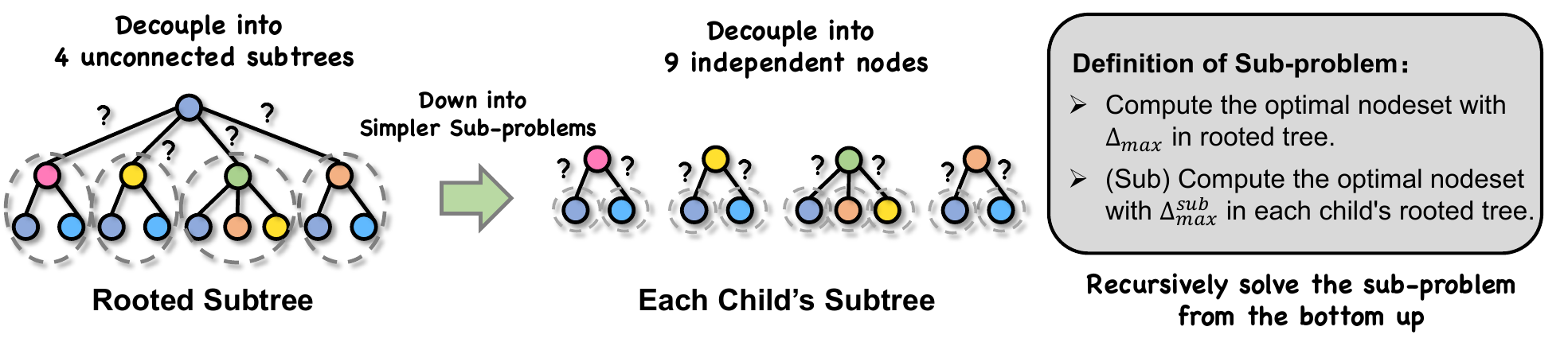}
\caption{MRQSampler: (i) Derive the condition (Theorem \ref{corollary2}) satisfied with the optimal subtree. (ii) Decompose the problem into simpler sub-problems by recursing through the tree depth to solve the optimal subtree with the dynamic programming (DP) algorithm.}\vspace{-3mm}
\label{fig:sampler}
\end{figure}

\noindent\textbf{MRQSampler Algorithm.}
We introduce the Maximum Rayleigh Quotient Subgraph Sampler (MRQSampler), which uses dynamic programming (DP) to find the optimal solution. We break down the computation for the central node into sub-problems, storing the results of sub-problems to avoid redundant computations in future calculations. For a rooted subtree with the target node (edge) as the root, its children are unconnected to each other. In Fig. \ref{fig:sampler}, we consider a 2-depth subtree and summarize our algorithm as follows:
\begin{itemize}[nosep,left=1em]
\item \textbf{Stage 1:} We recursively compute and store the maximum $\Delta(\tilde{\mathcal{V}}_{new})$ for each subtree, which can be down into simpler sub-problems similar to the previous one and calculates each layer in the tree recursively from the bottom up. 
\item \textbf{Stage 2:} Based on Theorem \ref{corollary2}, we iteratively select the descendant with the maximum $\Delta_{max}(\tilde{\mathcal{V}}_{new})$ (within its own rooted subtree) of the target node and the currently selected nodeset, until the conditions of Theorem \ref{corollary2} are no longer satisfied, i.e., when the Rayleigh quotient of the sampled subgraph no longer increases.
\end{itemize}
For efficiency, this approach can obtain the subgraph with the maximum Rayleigh quotient of the target node/edge's rooted subtree while reducing the algorithmic complexity to $O(N \log N)$. It can be further accelerated in parallel since the computation for different nodes is independent. Additionally, the sampling process only needs to be computed once in training and inference processes, minimally impacting model efficiency. For the detailed pseudocode of the algorithm, please refer to Appendix \ref{pseudocode}. Note that we use mean pooling for entire graphs, but for subgraphs, we use weighted pooling to highlight central nodes/edges, with an exponential decay based on the number of hops to the central nodes/edges. This method transforms node-level and edge-level tasks into graph-level tasks, ensuring that the most anomaly information is retained in the sampled subgraphs.

\subsection{GraphStitch Network for Unifying Multi-level Training}
\label{unimodel}
After obtaining graph representations, training them together through a fully connected layer can negatively impact individual levels due to the inherent differences across different-level anomalies. This can result in mediocre performance at all levels. A key challenge, therefore, is to facilitate information transfer between multi-levels without compromising single-level effectiveness. Inspired by work in the computer vision field \cite{misra2016cross}, we introduce the novel GraphStitch Network to jointly consider multi-level representations.

Specifically, we train separate but identical networks for each level and use the GraphStitch unit to combine these networks into a multi-level network, managing the degree of sharing required at different levels. This approach aims to maintain single-level effectiveness while enhancing multi-level information transfer. The network structure is illustrated in Fig. \ref{fig:stitch}.

\begin{figure}[t]
  \centering
  \includegraphics[width=0.8\linewidth]{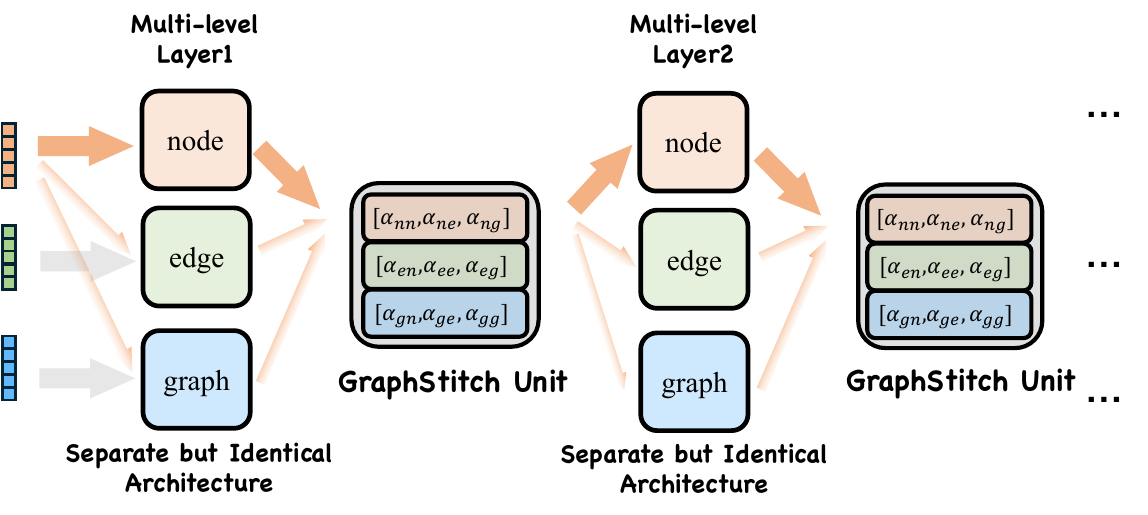} \vspace{-2mm}
  \caption{GraphStitch network structure in UniGAD. Node level is highlighted.} \vspace{-4mm}
  \label{fig:stitch}
\end{figure}

To elaborate, we denote $\mathbf{e}_N$, $\mathbf{e}_E$, and $\mathbf{e}_G$ as the embeddings for nodes, edges, and graphs, respectively. The node embedding $\mathbf{e}_N = (\mathbf{e}_{nn}, \mathbf{e}_{ne}, \mathbf{e}_{ng})^\top$ consists of outputs from three separate but identically structured networks specialized for nodes, edges, and graphs. Similarly, the edge and graph embeddings are represented as $\mathbf{e}_E = (\mathbf{e}_{en}, \mathbf{e}_{ee}, \mathbf{e}_{eg})^\top$ and $\mathbf{e}_G = (\mathbf{e}_{gn}, \mathbf{e}_{ge}, \mathbf{e}_{gg})^\top$.

We define a GraphStitch operation as follows:
\begin{equation}
\left(\tilde{\mathbf{e}}_N, \tilde{\mathbf{e}}_E, \tilde{\mathbf{e}}_G\right)=diag\left[
\left(\begin{array}{lll}
\alpha_{n n} & \alpha_{n e} & \alpha_{n g} \\
\alpha_{e n} & \alpha_{e e} & \alpha_{e g} \\
\alpha_{g n} & \alpha_{g e} & \alpha_{g g}
\end{array}\right)\left(\mathbf{e}_N, \mathbf{e}_E, \mathbf{e}_G\right)\right].
\end{equation}
The sharing of representations is achieved by learning a linear combination of the outputs from the three networks. This linear combination is parameterized using learnable $\mathbf{\alpha}$. In particular, when training data lacks a certain level, the influence of that level on other levels is defined as zero during training but still retains the influence of other levels on this level. In this way, it allows the labels for training and testing to be arbitrary. Besides, if all the cross terms ($\alpha_{ne}$, $\alpha_{ng}$, $\alpha_{en}$, $\alpha_{eg}$, $\alpha_{gn}$, $\alpha_{ge}$) are equal to 0 means that training the three networks jointly is equivalent to training them independently. Finally, the embeddings for nodes, edges, and graphs are fed into three independent multi-layer perceptrons (MLPs) to compute the abnormal probabilities $p_i^\mathcal{N}$, $p_i^\mathcal{E}$, and $p_i^\mathcal{G}$, respectively.

In addition to the GraphStitch structure, UniGAD optimizes the loss functions for multi-level tasks. Specifically, the gradients of each level task's loss may conflict in direction or magnitude, potentially causing negative effects and resulting in worse performance compared to learning single-level tasks individually. Therefore, UniGAD uses a multi-level weighted cross-entropy loss for training:
\begin{equation}
\mathcal{L}=\!\!\!\sum_{\{\mathcal{N},\mathcal{E},\mathcal{G}\}} \!\!\!\! \sum_i \beta^{\{\mathcal{N},\mathcal{E},\mathcal{G}\}} \left[\gamma y_i^{\{\mathcal{N},\mathcal{E},\mathcal{G}\}} \log \left(p_i^{\{\mathcal{N},\mathcal{E},\mathcal{G}\}}\right)+\left(1-y_i^{\{\mathcal{N},\mathcal{E},\mathcal{G}\}}\right) \log \left(1-p_i^{\{\mathcal{N},\mathcal{E},\mathcal{G}\}}\right)\right].
\end{equation}
where $\gamma$ is the ratio of anomaly labels ($y_i=1$) to normal labels ($y_i=0$), and $\beta^{\{\mathcal{N},\mathcal{E},\mathcal{G}\}}$ are adaptive weights for different tasks. We adopt a `Gradient Surgery' approach \cite{yu2020gradient} to adjust the $\beta^{\{\mathcal{N},\mathcal{E},\mathcal{G}\}}$, altering the gradients by projecting each onto the normal plane of the others. This prevents the interfering components of the gradients from affecting the network and minimizes interference among different-level GAD tasks. In this way, UniGAD ensures that each level remains relatively independent while facilitating cross-passing of relevant information between multi-level tasks.

\section{Experiments}
\label{exp}

In this section, we conduct experiments to evaluate our UniGAD with node-level, edge-level, and graph-level tasks by answering the following questions: \textbf{Q1:} How effective is UniGAD in unifying in multi-level anomaly detection? \textbf{Q2:} Can UniGAD transfer information across different levels in zero-shot learning? \textbf{Q3:} What are the contributions of the modular design in the UniGAD model? \textbf{Q4:} How do the time and space efficiencies of UniGAD compare to those of other methods?

\subsection{Experimental Setup}

\noindent\textbf{Datasets.} 
We consider a total of 14 datasets, including both single-graph datasets and multi-graph datasets. 7 single-graph datasets are used to evaluate the performance of unifying node-level and edge-level tasks: Reddit, Weibo, Amazon, Yelp, Tolokers, and Questions, T-finance from the work \cite{tang2024gadbench}, which contain node-level anomaly labels. For edge anomaly labels, we generated them according to a specific anomaly probability following the formula $ P_{\text{anom}}^{(i,j)} = avg (P_{\text{anom}}^i, P_{\text{anom}}^j)$. And 7 multi-graph datasets are used to validate the performance of unifying node-level and graph-level tasks, including BM-MN, BM-MS, BM-MT, MUTAG, MNIST0, MNIST1, and T-Group. The first six datasets are from \cite{liu2024towards}, containing both node anomaly labels and graph anomaly labels. Moreover, we release a real-world large-scale social group dataset T-Group, combining the data (graph anomaly labels) in \cite{li2019semi}. For its node anomaly labels, we assume that if a node appears in 3 malicious social groups, we consider it a malicious node. Statistical data for these datasets can be found in Table \ref{tab:datasets-stat}, including the percentage of training data, the number of graphs, edges, nodes, feature dimensions, and the proportions of abnormal nodes, edges, and graphs (Nodes$_{ab}$, Edges$_{ab}$, and Graphs$_{ab}$).

\begin{table}[h]
    \caption{Detailed statistics of the datasets used in our experiments.}
    \label{tab:datasets-stat}
    \setlength{\tabcolsep}{2pt}
    \centering
    \begin{adjustbox}{width=0.8\textwidth}
        \begin{tabular}{@{}ccccccccc@{}}
            \toprule
            \textbf{Dataset} & \textbf{Train\%}    & \textbf{\# Graphs} & \textbf{\# Edges} & \textbf{\# Nodes} & \textbf{\# Dims} & \textbf{Nodes$_{ab}$} & \textbf{Edges$_{ab}$} & \textbf{Graphs$_{ab}$}  \\ \midrule
            Reddit           & 40\%                    & 1         & 168,016                & 10,984                 & 64                     & 3.33\%                         & 2.72\%                         &     /       \\
            Weibo            & 40\%                    & 1         & 416,368                & 8,405                  & 400                    & 10.33\%                        & 5.71\%                         &     /       \\
            Amazon           & 70\%                    & 1         & 8,847,096               & 11,944                 & 25                     & 6.87\%                         & 2.49\%                         &     /       \\
            Yelp             & 70\%                    & 1         & 7,739,912               & 45,954                 & 32                     & 14.53\%                        & 13.89\%                        &     /       \\
            Tolokers         & 50\%                    & 1         & 530,758                 & 11,758                 & 10                     & 21.82\%                        & 33.44\%                        &     /       \\
            Questions        & 50\%                    & 1         & 202,461                & 48,921                 & 301                    & 2.98\%                         & 7.50\%                         &     /       \\
            T-Finance        & 40\%                    & 1         & 21,222,543                & 39,357                 & 10                    & 4.58\%                         & 2.77\%                         &     /       \\ 
            \midrule
            BM-MN      & 40\%                    & 700                  & 40,032                 & 12,911                 & 1                      & 48.91\%                             &              /                 & 14.29\%    \\
            BM-MS      & 40\%                    & 700                  & 30,238                 & 9,829                  & 1                      & 31.99\%                             &              /                 & 14.29\%    \\
            BM-MT      & 40\%                    & 700                  & 32,042                 & 10,147                 & 1                      & 34.49\%                             &              /                 & 14.29\%    \\ 
            MUTAG           & 40\%                    & 2,951                 & 179,732                & 88,926                 & 14                     & 4.81\%                         &              /                 & 34.40\%    \\
            MNIST0           & 10\%                    & 70,000                & 41,334,380              & 4,939,668               & 5                      & 35.46\%                        &              /                 & 9.86\%     \\
            MNIST1           & 10\%                    & 70,000                & 41,334,380              & 4,939,668               & 5                      & 35.46\%                        &              /                 & 11.25\%    \\
            T-Group          & 40\%                    & 37,402                & 93,367,082              & 11,015,616              & 10                     & 0.64\%                         &              /                 & 4.26\%     \\ \bottomrule
            \end{tabular}
\end{adjustbox}
\end{table}

\noindent\textbf{Baselines.} 
To comprehensively compare with traditional single-level tasks, we consider nine representative node-level methods: GCN \cite{GCN}, GIN \cite{GIN}, GraphSAGE \cite{GraphSAGE}, SGC \cite{wu2019simplifying}, GAT \cite{GAT}, BernNet \cite{BernNet}, PNA \cite{PNA}, AMNet \cite{AMNet}, and BWGNN \cite{BWGNN}. Given the limited work on edge anomalies, we adapt a concatenated strategy \cite{zhang2022graph} from link prediction, resulting in nine corresponding edge-level methods: GCNE, GINE, GSAGEE, SGCE, GATE, BernE, PNAE, AME, and BWE. For graph-level anomaly detection, we consider six state-of-the-art methods: OCGIN \cite{OCGIN}, OCGTL \cite{OCGTL}, GLocalKD \cite{GLocalKD}, iGAD \cite{iGAD}, GmapAD \cite{GmapAD}, and RQGNN \cite{dong2023rayleigh}. Additionally, to compare multi-task models, we include two recent multi-task graph prompt methods: GraphPrompt \cite{liu2023graphprompt} and All-in-One \cite{sun2023all}. While these methods were not originally proposed for joint multi-task training, we adapt their ideas and develop multi-task versions for our comparison, GraphPrompt-U and All-in-One-U, whose modifications were limited to the data preprocessing component to accommodate the simultaneous handling of multiple object types (node/edge or node/graph) within induced graphs. 

\noindent\textbf{Implementations.} We evaluate three metrics: AUROC (Sec. \ref{exp}),  Macro F1-score and AUPRC (Appendix \ref{addsesult}). For each result, we conduct 5 runs and report the mean results. In UniGAD, we choose two backbone GNN encoders: GCN \cite{GCN} and BWGNN \cite{BWGNN}. We use a shared graph pre-training method, GraphMAE \cite{GraphMAE}, to obtain a more generalized node representation. For multi-dimensional feature vectors, we normalize all feature dimensions and then take the norm (1-norm in our case) to obtain a composite feature for each node, allowing us to identify the most anomalous nodes in MRQSampler based on this comprehensive feature.
To avoid data leakage, for single-graph datasets, edges between the training set and the testing set are not considered; for multi-graph datasets, the training set and the testing set consist of different graphs and their nodes. More details on the implementation can be found in the Appendix \ref{expdetails}.

\subsection{Multi-Level Performance Comparison (RQ1)}
To compare the performance of multi-level anomaly detection, we conduct experiments under two settings. For the single-graph datasets, we compare the performance of unified training on node-level and edge-level data. For the multi-graph datasets, we compare the performance of unified training on node-level and graph-level data.

\begin{table}[t]
    \caption{Comparison of unified performance (AUROC) at both node and edge levels with different single-level methods, multi-task methods, and our proposed method.}
    \label{NEfull}
    \centering
    \resizebox{\textwidth}{!}{
    \begin{tabular}{c|c|cc|cc|cc|cc|cc|cc|cc}
    \toprule
    \multirow{2}{*}{ } & \bf{Dataset} & \multicolumn{2}{c|}{\bf{Reddit}} & \multicolumn{2}{c|}{\bf{Weibo}} & \multicolumn{2}{c|}{\bf{Amazon}} & \multicolumn{2}{c|}{\bf{Yelp}} & \multicolumn{2}{c|}{\bf{Tolokers}} & \multicolumn{2}{c|}{\bf{Questions}}  & \multicolumn{2}{c}{\bf{T-Finance}} \\ 
     & \bf{Task-level} & \bf{Node} & \bf{Edge}  & \bf{Node} & \bf{Edge} & \bf{Node} & \bf{Edge}  & \bf{Node} & \bf{Edge}  & \bf{Node} & \bf{Edge}  & \bf{Node} & \bf{Edge} & \bf{Node} & \bf{Edge} \\ \midrule
    \multirow{9}{*}{Node-Level} 
    & GCN & $62.60$ & / & $97.97$ & / & $82.37$ & / & $57.62$ & / & $75.21$ & / & $70.15$ & / &$90.70$ & /\\
     & GIN & $65.59$ & / & $95.64$ & / & $92.17$ & / & $74.46$ & / & $75.15$ & / & $69.13$ & / &$86.43$ & /\\
     & GraphSAGE & $62.25$ & / & $94.45$ & / & $84.53$ & / & $82.12$ & / & $79.74$ & / & $72.47$ & /&$78.16$ & / \\
     & SGC & $52.12$ & / & $97.71$ & / & $80.24$ & / & $53.03$ & / & $69.51$ & / & $70.59$ & /& $74.21$ & / \\
     & GAT & $65.87$ & / & $94.40$ & / & $96.24$ & / & $77.40$ & / & $78.90$ & / & $71.38$ & / &$90.60$ & / \\
     & BernNet & $66.68$ & / & $93.93$ & / & $96.62$ & / & $81.48$ & / & $76.68$ & / & $70.28$ & / &$92.37$ & / \\
     & PNA & $65.28$ & / & $97.43$ & / & $81.41$ & / & $71.81$ & / & $75.82$ & / & $71.78$ & /& $68.17$ & / \\
     & AMNet & $68.31$ & / & $94.17$ & / & $97.31$ & / & $81.42$ & / & $76.67$ & / & $68.63$ & / &$93.58$ & / \\
     & BWGNN & $64.65$ & / & $97.42$ & / & $97.80$ & / & $83.11$ & / & $80.51$ & / & $70.25$ & /& $96.03$ & / \\ \midrule
    \multirow{9}{*}{Edge-level} 
    & GCNE & / & \multicolumn{1}{l|}{$63.10$} & / & \multicolumn{1}{l|}{$99.03$} & / & \multicolumn{1}{l|}{$78.63$} & / & \multicolumn{1}{l|}{$57.80$} & / & \multicolumn{1}{l|}{$73.59$} & / & \multicolumn{1}{l|}{$79.05$} & / & \multicolumn{1}{l}{$87.63$}\\
     & GINE & / & \multicolumn{1}{l|}{$67.36$} & / & \multicolumn{1}{l|}{$98.09$} & / & \multicolumn{1}{l|}{$79.74$} & / & \multicolumn{1}{l|}{$67.58$} & / & \multicolumn{1}{l|}{$69.27$} & / & \multicolumn{1}{l|}{$80.75$} & / & \multicolumn{1}{l}{$79.05$} \\
    & GSAGEE & / & \multicolumn{1}{l|}{$\bf{67.52}$} & / & \multicolumn{1}{l|}{$98.67$} & / & \multicolumn{1}{l|}{$78.92$} & / & \multicolumn{1}{l|}{$73.30$} & / & \multicolumn{1}{l|}{$\bf{76.98}$} & / & \multicolumn{1}{l|}{$\bf{87.51}$}& / & \multicolumn{1}{l}{$77.14$} \\
     & SGCE & / & \multicolumn{1}{l|}{$53.36$} & / & \multicolumn{1}{l|}{$98.55$} & / & \multicolumn{1}{l|}{$76.41$} & / & \multicolumn{1}{l|}{$52.02$} & / & \multicolumn{1}{l|}{$70.59$} & / & \multicolumn{1}{l|}{$74.24$} & / & \multicolumn{1}{l}{$69.01$}\\
     & GATE & / & \multicolumn{1}{l|}{$67.07$} & / & \multicolumn{1}{l|}{$97.92$} & / & \multicolumn{1}{l|}{$90.20$} & / & \multicolumn{1}{l|}{$72.96$} & / & \multicolumn{1}{l|}{$71.92$} & / & \multicolumn{1}{l|}{$81.64$} & / & \multicolumn{1}{l}{$83.09$}\\
     & BernE & / & \multicolumn{1}{l|}{$65.57$} & / & \multicolumn{1}{l|}{$97.87$} & / & \multicolumn{1}{l|}{$89.60$} & / & \multicolumn{1}{l|}{$73.93$} & / & \multicolumn{1}{l|}{$73.39$} & / & \multicolumn{1}{l|}{$84.78$} & / & \multicolumn{1}{l}{$87.80$} \\
     & PNAE & / & \multicolumn{1}{l|}{$64.15$} & / & \multicolumn{1}{l|}{$99.10$} & / & \multicolumn{1}{l|}{$75.71$} & / & \multicolumn{1}{l|}{$67.98$} & / & \multicolumn{1}{l|}{$75.09$} & / & \multicolumn{1}{l|}{$84.05$} & / & \multicolumn{1}{l}{$83.91$}\\
     & AME & / & \multicolumn{1}{l|}{$66.73$} & / & \multicolumn{1}{l|}{$97.08$} & / & \multicolumn{1}{l|}{$89.36$} & / & \multicolumn{1}{l|}{$73.69$} & / & \multicolumn{1}{l|}{$71.99$} & / & \multicolumn{1}{l|}{$84.93$}& / & \multicolumn{1}{l}{$86.19$} \\
     & BWE & / & \multicolumn{1}{l|}{$67.39$} & / & \multicolumn{1}{l|}{$98.93$} & / & \multicolumn{1}{l|}{$91.61$} & / & \multicolumn{1}{l|}{$75.63$} & / & \multicolumn{1}{l|}{$75.66$} & / & \multicolumn{1}{l|}{$85.00$}& / & \multicolumn{1}{l}{$92.27$} \\ 
     \midrule
    \multirow{2}{*}{Multi-task} 
    & GraphPrompt-U & $50.03$  & $49.78$  & $55.29$  & $50.71$  & $50.01$  & $50.96$ & $49.83$ & $49.56$ & $51.24$  & $49.66$  & $55.16$  & $50.01$ & OOT & OOT  \\
    & All-in-One-U  & $51.35$  & $54.10$  & $48.61$  & $52.63$  & $56.11$  & $54.80$ & $49.77$ & $49.13$ & $50.41$  & $49.29$  & $51.49$  & $64.24$ & OOT & OOT  \\
    
     \midrule
    \multirow{2}{*}{\begin{tabular}[c]{@{}c@{}}UniGAD\\ (Ours)\end{tabular}} 
    & UniGAD - GCN & \multicolumn{1}{l}{$\bf{71.65}$} & \multicolumn{1}{l|}{$65.46$} & \multicolumn{1}{l}{$99.02$} & \multicolumn{1}{l|}{$\bf{99.13}$} & \multicolumn{1}{l}{$82.92$} & \multicolumn{1}{l|}{$80.04$} & \multicolumn{1}{l}{$63.22$} & \multicolumn{1}{l|}{$61.74$} & \multicolumn{1}{l}{$77.26$} & \multicolumn{1}{l|}{$72.89$} & \multicolumn{1}{l}{$\bf{73.92}$} & \multicolumn{1}{l|}{$74.72$} & \multicolumn{1}{l}{$95.68$} & \multicolumn{1}{l}{$93.75$}\\
     &  UniGAD - BWG & \multicolumn{1}{l}{$64.42$} & \multicolumn{1}{l|}{$53.60$} & \multicolumn{1}{l}{$\bf{99.07}$} & \multicolumn{1}{l|}{$99.10$} & \multicolumn{1}{l}{$\bf{97.84}$} & \multicolumn{1}{l|}{$\bf{92.18}$} & \multicolumn{1}{l}{$\bf{86.23}$} & \multicolumn{1}{l|}{$\bf{79.05}$} & \multicolumn{1}{l}{$\bf{80.62}$} & \multicolumn{1}{l|}{$74.85$} & \multicolumn{1}{l}{$70.97$} & \multicolumn{1}{l|}{$73.45$} & \multicolumn{1}{l}{$\bf{96.49}$} & \multicolumn{1}{l}{$\bf{94.32}$}\\ 
    
    \bottomrule
    \end{tabular}
     }    
    \end{table}

\begin{table}[t]
\vspace{-3mm}
    \caption{Comparison of unified performance (AUROC) at both node and graph levels with different single-level methods, multi-task methods, and our proposed method.}
    \label{NGfull}
    \centering
    \resizebox{\textwidth}{!}{
    \begin{tabular}{c|c|cc|cc|cc|cc|cc|cc|cc}
    \toprule
    \multirow{2}{*}{ } & \bf{Dataset} & \multicolumn{2}{c|}{\bf{BM-MN}} & \multicolumn{2}{c|}{\bf{BM-MS}} & \multicolumn{2}{c|}{\bf{BM-MT}} & \multicolumn{2}{c|}{\bf{MUTAG}} & \multicolumn{2}{c|}{\bf{MNIST0}} & \multicolumn{2}{c|}{\bf{MNIST1}}  & \multicolumn{2}{c}{\bf{T-Group}} \\ 
     & \bf{Task-level} & \bf{Node} & \bf{Graph}  & \bf{Node} & \bf{Graph} & \bf{Node} & \bf{Graph}  & \bf{Node} & \bf{Graph}  & \bf{Node} & \bf{Graph}  & \bf{Node} & \bf{Graph}  & \bf{Node} & \bf{Graph} \\ \midrule
    \multirow{9}{*}{Node-level} 
    & GCN       & $86.31$ & / & $90.17$ & / & $92.30$ & / & $99.38$ & / & $94.10$  & / & $93.84$  & / & $91.81$ & / \\
    & GIN       & $56.73$ & / & $50.41$ & / & $54.90$ & / & $99.39$ & / & $93.55$  & / & $93.49$  & / & $61.51$ & / \\
    & GraphSAGE & $50.00$ & / & $50.00$ & / & $49.95$ & / & $99.26$ & / & $\bf{99.99}$ & / & $\bf{99.99}$ & / & $64.15$ & / \\
    & SGC       & $50.27$ & / & $50.87$ & / & $49.44$ & / & $89.19$ & / & $86.97$  & / & $86.97$  & / & $82.55$ & / \\
    & GAT       & $58.47$ & / & $62.52$ & / & $65.72$ & / & $99.42$ & / & $99.90$  & / & $\bf{99.99}$  & / & $78.17$ & / \\
    & BernNet   & $60.06$ & / & $65.58$ & / & $59.18$ & / & $98.97$ & / & $\bf{99.99}$ & / & $\bf{99.99}$ & / & $93.85$ & / \\
    & PNA       & $72.96$ & / & $55.19$ & / & $75.61$ & / & $98.76$ & / & $99.80$  & / & $99.87$  & / & $55.66$ & / \\
    & BWGNN     & $93.05$ & / & $87.22$ & / & $88.97$ & / & $99.50$ & / & $\bf{99.99}$ & / & $\bf{99.99}$ & / & $94.81$ & / \\ \midrule
    \multirow{6}{*}{Graph-level} 
    & OCGIN    & / & \multicolumn{1}{l|}{$98.46$} & / & \multicolumn{1}{l|}{$81.97$} & / & \multicolumn{1}{l|}{$58.05$} & / & \multicolumn{1}{l|}{$89.50$} & / & \multicolumn{1}{l|}{$57.24$} & / & \multicolumn{1}{l|}{$86.15$} & / & 64.53 \\
    & OCGTL    & / & \multicolumn{1}{l|}{$98.48$} & / & \multicolumn{1}{l|}{$83.17$} & / & \multicolumn{1}{l|}{$59.99$} & / & \multicolumn{1}{l|}{$92.19$} & / & \multicolumn{1}{l|}{$59.35$} & / & \multicolumn{1}{l|}{$93.45$} & / & 46.77 \\
    & GLocalKD & / & \multicolumn{1}{l|}{$92.36$} & / & \multicolumn{1}{l|}{$77.25$} & / & \multicolumn{1}{l|}{$53.23$} & / & \multicolumn{1}{l|}{$72.77$} & / & \multicolumn{1}{l|}{$66.69$} & / & \multicolumn{1}{l|}{$57.42$} & / & 78.53 \\
    & iGAD     & / & \multicolumn{1}{l|}{$91.68$}     & / & \multicolumn{1}{l|}{$96.68$}     & / & \multicolumn{1}{l|}{$99.14$}     & / & \multicolumn{1}{l|}{$96.28$}     & / & \multicolumn{1}{l|}{$98.93$}     & / & \multicolumn{1}{l|}{$99.50$}     & / & $64.44$    \\
    & GmapAD  & / & \multicolumn{1}{l|}{$50.00$} & / & \multicolumn{1}{l|}{$50.00$} & / & \multicolumn{1}{l|}{$50.00$} & / & \multicolumn{1}{l|}{$75.48$} & / & \multicolumn{1}{l|}{OOM}     & / & \multicolumn{1}{l|}{OOM}     & / & OOM     \\
    & RQGNN    & / & \multicolumn{1}{l|}{$\bf{98.79}$} & / & \multicolumn{1}{l|}{$97.98$} & / & \multicolumn{1}{l|}{$99.83$} & / & \multicolumn{1}{l|}{$96.41$} & / & \multicolumn{1}{l|}{$96.62$} & / & \multicolumn{1}{l|}{$95.57$} & / & 73.90 \\
    \midrule
    \multirow{2}{*}{Multi-task} 
    & GraphPrompt-U & $51.59$  & $46.85$  & $50.54$  & $48.67$  & $51.42$  & $49.38$  & $97.08$  & $68.23$ & $81.16$ & $83.88$ & $81.37$ & $6.16$ & $47.40$ & $50.81$\\
    & All-in-One-U  & $67.87$  & $3.21$   & $54.70$  & $19.42$  & $69.70$  & $45.89$  & $50.63$  & $48.98$ & OOT & OOT & OOT & OOT & OOT & OOT \\
    
     \midrule
    \multirow{2}{*}{\begin{tabular}[c]{@{}c@{}}UniGAD\\ (Ours)\end{tabular}} 
    & UniGAD - GCN   & \multicolumn{1}{l}{$\bf{99.75}$} & \multicolumn{1}{l|}{$94.29$} & \multicolumn{1}{l}{$\bf{99.60}$} & \multicolumn{1}{l|}{$\bf{99.67}$} & \multicolumn{1}{l}{$\bf{99.63}$} & \multicolumn{1}{l|}{$\bf{99.99}$} & \multicolumn{1}{l}{$99.50$} & \multicolumn{1}{l|}{$96.33$} & \multicolumn{1}{l}{$97.93$}      & $98.99$      & $98.11$  & $99.59$ & $95.57$ & $88.73$ \\
    &  UniGAD - BWG  & \multicolumn{1}{l}{$92.60$} & \multicolumn{1}{l|}{$68.74$} & \multicolumn{1}{l}{$93.30$} & \multicolumn{1}{l|}{$68.55$} & \multicolumn{1}{l}{$90.76$} & \multicolumn{1}{l|}{$56.01$} & \multicolumn{1}{l}{$\bf{99.54}$}  & \multicolumn{1}{l|}{$\bf{96.73}$}                     & \multicolumn{1}{l}{$\bf{99.99}$}     & $\bf{99.61}$      & $\bf{99.99}$ & $\bf{99.98}$ & $\bf{96.19}$ & $\bf{88.78}$ \\ 
    
    \bottomrule
    \end{tabular}
    }
    \end{table}

\noindent\textbf{Node-level and edge-level jointly.} We first evaluate the performance of unified training on node-level and edge-level data. We compare UniGAD against three groups of GNN models mentioned above: node-level models, edge-level models, and multi-task graph learning methods. Table \ref{NEfull} reports the AUROC of each model on six datasets, with the best result on each dataset highlighted in boldface. Overall, we find that UniGAD achieves state-of-the-art performance in nearly all scenarios. UniGAD outperforms single-level specialized models, indicating that unified training with UniGAD leverages information from other levels to enhance the performance of individual tasks. Multi-task approaches (GraphPrompt-U and All-in-One-U) tend to negatively impact multi-task performance, potentially because they are unable to effectively handle different types of anomaly label supervision. 
Meanwhile, UniGAD is designed for a multi-task setting, the performance on a single level might be slightly compromised to ensure the model performs well across all tasks in some datasets.

\noindent\textbf{Node-level and graph-level jointly.} We then evaluate the unified training of node-level and graph-level tasks under similar settings. Table \ref{NGfull} shows the results, and UniGAD achieves state-of-the-art performance in nearly all scenarios. Our observations are as follows. First, there is a multi-level synergy in UniGAD, where strong performance in one task benefits the performance of other tasks. For example, in MNIST-0 and MNIST-1, compared to other graph-level GAD methods, UniGAD significantly boosts graph-level performance by leveraging strong node-level results. Second, UniGAD performs better on large graphs, likely because graph structure plays a more significant role in smaller datasets. However, the backbones of UniGAD (GCN, BWGNN) are primarily node-level models, which may not effectively encode graph-level structural information. This limitation’s impact diminishes in large-scale graph datasets. Besides, methods like All-in-One-U often run out of time (OOT) with large datasets because they redundantly learn the same node representations across different subgraphs, making processing impractically slow, especially for large graph-level datasets like T-Group. UniGAD addresses this issue by using a shared GNN encoder across tasks, avoiding redundant learning and enhancing efficiency.

\subsection{The Transferability in Zero-Shot Learning (RQ2)}

\begin{table}[t]
\label{tab3}
    \caption{Zero-shot transferability (AUROC) at node and edge levels.}
    \label{NEzero}
    \centering
    \resizebox{\textwidth}{!}{
    \begin{tabular}{c|cc|cc|cc|cc|cc|cc|cc}
    \toprule
    \multirow{2}{*}{ \bf{Methods}} & \multicolumn{2}{c|}{\bf{Reddit}} & \multicolumn{2}{c|}{\bf{Weibo}} & \multicolumn{2}{c|}{\bf{Amazon}} & \multicolumn{2}{c|}{\bf{Yelp}} & \multicolumn{2}{c|}{\bf{Tolokers}} & \multicolumn{2}{c|}{\bf{Questions}} & \multicolumn{2}{c}{\bf{T-Finance}} \\ 
       & \bf{N$\rightarrow$E} & \bf{E$\rightarrow$N}  & \bf{N$\rightarrow$E} & \bf{E$\rightarrow$N} & \bf{N$\rightarrow$E} & \bf{E$\rightarrow$N}  & \bf{N$\rightarrow$E} & \bf{E$\rightarrow$N}  & \bf{N$\rightarrow$E} & \bf{E$\rightarrow$N}  & \bf{N$\rightarrow$E} & \bf{E$\rightarrow$N} & \bf{N$\rightarrow$E} & \bf{E$\rightarrow$N} \\ \midrule
    
     GraphPrompt-U & $54.06$  & $47.43$  & $57.03$  & $42.85$  & $49.76$			
 & $50.26$ & $49.97$ & $49.94$ & $48.56$  & $51.08$  & $54.26$  & $51.97$  & OOT & OOT \\
      All-in-One-U   & $49.23$  & $49.93$  & $52.22$  & $54.30$  & $52.61$ & $42.35$ & $49.48$ & $44.50$ & $48.34$  & $50.22$  & $49.83$  & $51.97$ & OOT & OOT 
       \\ 
     \midrule
     UniGAD - GCN & \multicolumn{1}{l}{$\bf{59.67}$} & \multicolumn{1}{l|}{$\bf{59.46}$} & \multicolumn{1}{l}{$\bf{98.31}$} & \multicolumn{1}{l|}{$\bf{98.59}$} & \multicolumn{1}{l}{$76.20$} & \multicolumn{1}{l|}{$82.38$} & \multicolumn{1}{l}{$58.28$} & \multicolumn{1}{l|}{$60.92$} & \multicolumn{1}{l}{$71.45$} & \multicolumn{1}{l|}{$73.35$} & $69.54$ & $\bf{65.37}$ & $91.63$ & $90.17$ \\
     UniGAD - BWG & \multicolumn{1}{l}{$53.32$} & \multicolumn{1}{l|}{$57.63$} & \multicolumn{1}{l}{$94.71$} & \multicolumn{1}{l|}{$96.87$} & \multicolumn{1}{l}{$\bf{82.64}$} & \multicolumn{1}{l|}{$\bf{96.41}$} & \multicolumn{1}{l}{$\bf{75.56}$} & \multicolumn{1}{l|}{$\bf{84.08}$} & $\bf{74.04}$ & \multicolumn{1}{l|}{$\bf{78.49}$} & $\bf{71.02}$ & $62.72$& $\bf{93.60}$ & $\bf{95.68}$ \\ 
    \bottomrule
    \end{tabular}
    }
\end{table}
\begin{table}[t]
    \label{tab4}
        \caption{Zero-shot transferability (AUROC) at node and graph levels.}
        \label{NGzero}
        \centering
        \resizebox{\textwidth}{!}{
        \begin{tabular}{c|cc|cc|cc|cc|cc|cc}
        \toprule
        \multirow{2}{*}{ \bf{Methods}}  & \multicolumn{2}{c|}{\bf{BM-MN}} & \multicolumn{2}{c|}{\bf{BM-MS}} & \multicolumn{2}{c|}{\bf{BM-MT}} & \multicolumn{2}{c|}{\bf{MUTAG}} & \multicolumn{2}{c|}{\bf{MNIST0}}  & \multicolumn{2}{c}{\bf{T-Group}} \\ 
             & \bf{N$\rightarrow$G} & \bf{G$\rightarrow$N}  & \bf{N$\rightarrow$G} & \bf{G$\rightarrow$N} & \bf{N$\rightarrow$G} & \bf{G$\rightarrow$N}  & \bf{N$\rightarrow$G} & \bf{G$\rightarrow$N}  & \bf{N$\rightarrow$G} & \bf{G$\rightarrow$N}  & \bf{N$\rightarrow$G} & \bf{G$\rightarrow$N} \\ \midrule
         GraphPrompt-U & $50.60$  & $51.57$  & $51.97$  & $46.95$  & $46.62$  & $48.06$  & $59.62$  & $64.26$ & $83.98$	 & $\bf{88.06}$ & $58.28$	 &  $58.35$ \\
        All-in-One-U    & $\bf{94.39}$  & $65.69$  & $52.63$  & $40.88$  & $44.86$  & $34.27$  & $61.63$  & $36.13$ & OOT & OOT &   OOT & OOT\\ 
            \midrule
        UniGAD - GCN & \multicolumn{1}{l}{$72.82$} & \multicolumn{1}{l|}{$\bf{87.63}$} & \multicolumn{1}{l}{$\bf{81.49}$} & \multicolumn{1}{l|}{$\bf{90.83}$} & \multicolumn{1}{l}{$\bf{62.85}$} & \multicolumn{1}{l|}{$\bf{79.26}$} & $\bf{72.79}$ & \multicolumn{1}{l|}{$\bf{88.53}$} & \multicolumn{1}{l}{$\bf{85.24}$} & \multicolumn{1}{l|}{$70.57$} &  \multicolumn{1}{l}{$\bf{86.86}$} &  \multicolumn{1}{l}{$\bf{75.89}$}  \\
        UniGAD - BWG & \multicolumn{1}{l}{$64.61$} & \multicolumn{1}{l|}{$57.56$} & \multicolumn{1}{l}{$65.33$} & \multicolumn{1}{l|}{$51.34$} & \multicolumn{1}{l}{$55.78$} & \multicolumn{1}{l|}{$53.41$} & \multicolumn{1}{l}{$66.92$} & \multicolumn{1}{l|}{$87.03$} & \multicolumn{1}{l}{$74.23$} & \multicolumn{1}{l|}{$63.70$} &  \multicolumn{1}{l}{$86.81$} & \multicolumn{1}{l}{$64.81$} \\ 
        \bottomrule
        \end{tabular}
        }
    \end{table}

To assess the transfer capability of UniGAD, we explore zero-shot learning scenarios where labels for a given level have never been exposed during training, as shown in Tables \ref{NEzero} and \ref{NGzero}. In these experiments, UniGAD is trained solely with labels from alternative levels. The notation $\mathbf{N \rightarrow E}$ indicates using node labels to infer edge labels, with analogous notations for other label transfers. Our findings indicate that in zero-shot scenarios, UniGAD outperforms existing multi-task prompt learning methods. Moreover, the classification performance of UniGAD under zero-shot transfer learning even surpasses some of the leading baselines in supervised settings on Yelp and BM-MS. It highlights the superior transfer capability of UniGAD across various GAD tasks.

\subsection{Ablation Study (RQ3)}

\noindent

\begin{wraptable}{R}{8.2cm}
    \centering
    \vspace{-13pt}
    \scriptsize
    \setlength{\tabcolsep}{3pt}
    \caption{Performance of UniGAD and its variants.}

    \begin{tabular}{c|cccc|cccc}
        \toprule
         & \multicolumn{4}{c|}{BM-MS} & \multicolumn{4}{c}{Reddit} \\  \midrule
        Metrics & \multicolumn{2}{c|}{AUROC} & \multicolumn{2}{c|}{Macro F1} & \multicolumn{2}{c|}{AUROC} & \multicolumn{2}{c}{Macro F1} \\
        Task-level & node    & \multicolumn{1}{c|}{graph} & node & graph & node & \multicolumn{1}{c|}{edge} & node & edge \\  \midrule
        w/o GS.    & $97.13$ & \multicolumn{1}{c|}{$98.99$} & $80.35$ & $95.79$ & $68.69$ & \multicolumn{1}{c|}{$66.06$} & $53.83$ & $52.78$ \\ 
        w 2hop.    & $97.49$ & \multicolumn{1}{c|}{$99.94$} & $67.29$ & $84.20$ & $67.53$ & \multicolumn{1}{c|}{$63.62$} & $51.77$ & $50.69$ \\
        w RS.      & $93.85$ & \multicolumn{1}{c|}{$84.92$} & $85.88$ & $72.21$ & $65.32$ & \multicolumn{1}{c|}{$61.85$} & $52.32$ & $51.03$ \\
        \midrule
        w/o ST.    & $99.94$ & \multicolumn{1}{c|}{$95.51$} & $99.47$ & $84.91$ & $67.74$ & \multicolumn{1}{c|}{$65.92$} & $54.35$ & $52.52$ \\ 
        \midrule
        UniGAD     & $99.60$ & \multicolumn{1}{c|}{$99.67$} & $99.57$ & $95.86$ & $71.65$ & \multicolumn{1}{c|}{$65.46$} & $56.70$ & $53.80$ \\  
        
        \bottomrule
        \end{tabular}     
        \label{abla}
\end{wraptable} 

To investigate the contribution of each module in UniGAD, we present the ablation study results in Table \ref{abla}. For the sampler module, we compare the results without subgraph sampling (w/o GS.), using a simple sampler with all 2-hop neighbors (w 2hop.), and using random sampling (w RS.). For the GraphStitch module, we replace it with a unified MLP (w/o ST.). The results indicate that both the subgraph sampler (SG.) and the GraphStitch (ST.) modules enhance the overall performance of UniGAD. Additionally, inappropriate subgraph sampling may perform worse than no subgraph sampling, likely due to the loss of anomalous information during the sampling process.

\subsection{Efficiency Analysis (RQ4)}
we conduct a comprehensive evaluation of both time and space efficiency on the large-scale, real-world T-Group dataset. 
To provide a more straightforward comparison between single-task and multi-task baselines, we calculate the average, minimum, and maximum for combinations of single-task node-level and graph-level models, and compare these with multi-task models. The results, as shown in Fig. \ref{time} (a), indicate that in terms of execution time, our method is slower than the combination of the fastest single-level models but faster than the average of the combination.
Regarding peak memory usage, Fig. \ref{time} (b) demonstrates that graph-level models consume significantly more memory than node-level models. Our method maintains memory consumption comparable to node-level models and substantially lower than both graph-level GAD models and prompt-based methods.
\begin{figure}[t]
    \centering
    \begin{subfigure}[t]{0.45\textwidth}
        \centering
        \includegraphics[height=2.2in]{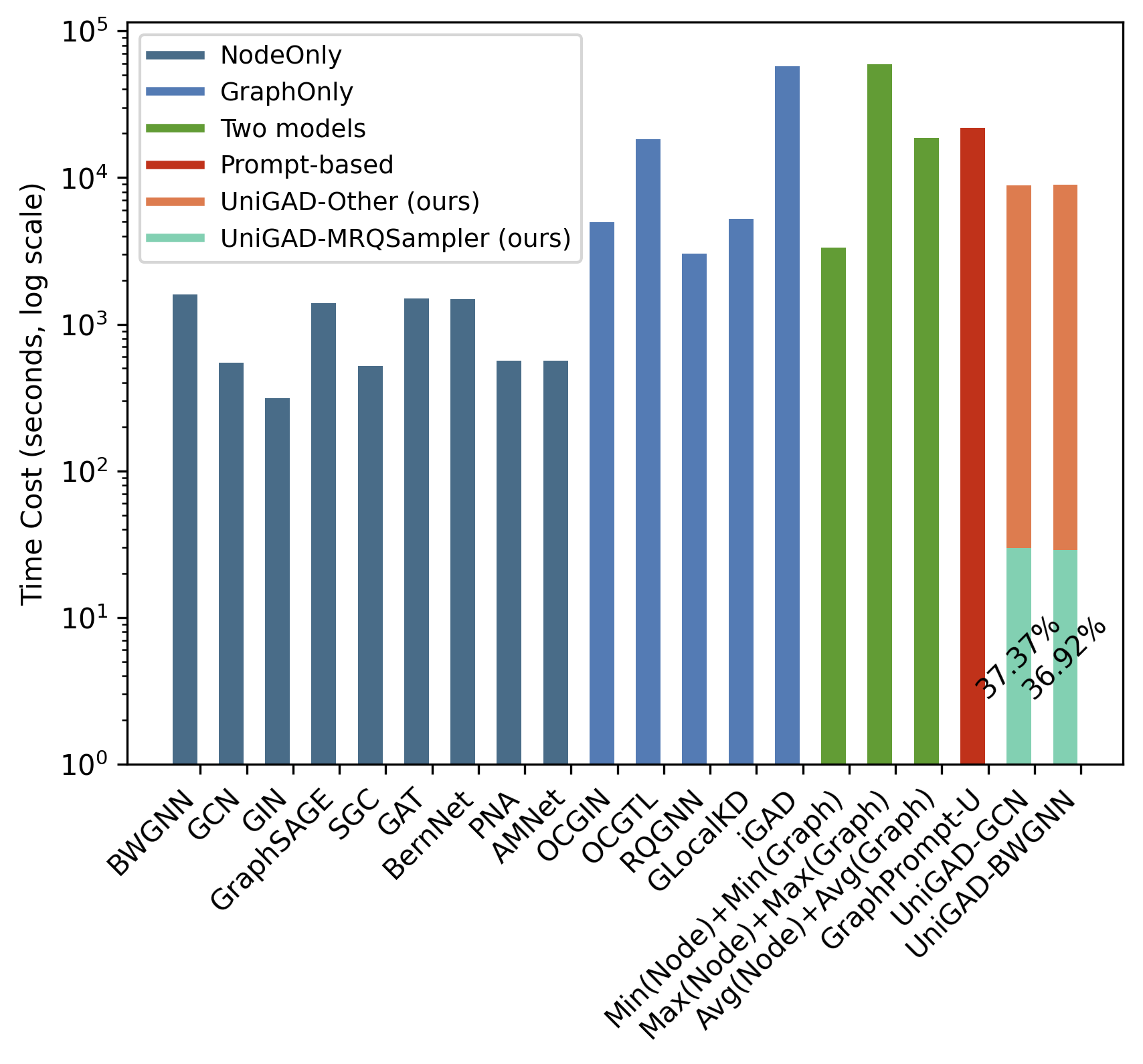}
        \caption{Comparing the execution time.}
    \end{subfigure}%
    \begin{subfigure}[t]{0.45\textwidth}
        \centering
        \includegraphics[height=2.2in]{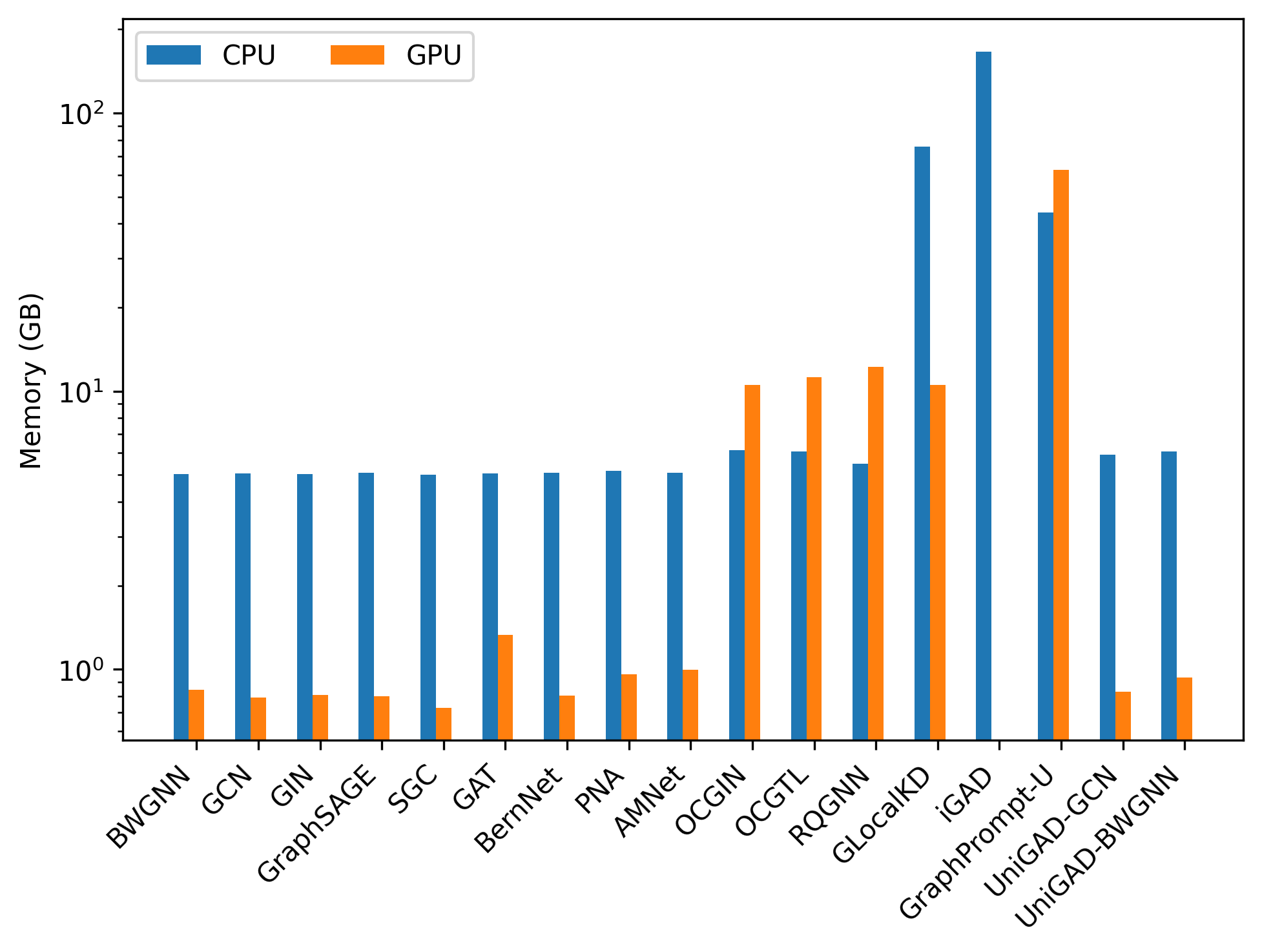}
        \caption{Comparing the peak memory usage.}
    \end{subfigure}
    \caption{The evaluation of time and space efficiency metrics. We highlight the percentage of total execution time spent by MRQSampler.}
    \label{time}  
\end{figure}

\section{Conclusion}
This paper presents UniGAD, a unified graph anomaly detection framework that jointly addresses anomalies at the node, edge, and graph levels. The model integrates two novel components: the MRQSampler and the GraphStitch network. MRQSampler maximizes spectral energy to ensure subgraphs capture critical anomaly information, addressing the challenge of unifying different graph object formats. The GraphStitch Network unifies multi-level training by using identical networks for nodes, edges, and graphs, facilitated by the GraphStitch Unit for effective information sharing. Our thorough evaluations across 14 GAD datasets, including two real-world large-scale datasets (T-Finance and T-Group), and comparisons with 17 graph learning methods show that UniGAD not only surpasses existing models in various tasks but also exhibits strong zero-shot transferability capabilities.
A limitation of our paper is that the GNN encoder primarily focuses on node-level embeddings, which may result in lost information about the graph structure. We leave the exploration of multi-level tasks pre-training in the future works.

\begin{ack}
Y. Lin and H. Zhao were supported by the Beijing Natural Science Foundation under Grant IS24036. J. Li was supported by NSFC Grant No. 62206067 and Guangzhou-HKUST(GZ) Joint Funding Scheme 2023A03J0673. Y.Yao was in part supported by the HKRGC GRF-16308321 and the NSFC/RGC Joint Research Scheme Grant N\_HKUST635/20. In addition, Y. Lin was also awarded a Tsinghua Scholarship for Overseas Graduate Studies at the Hong Kong University of Science and Technology. 
\end{ack}

\bibliography{refs}
\bibliographystyle{plain}

\newpage
\appendix

\renewcommand{\theequation}{A.\arabic{equation}}
\setcounter{equation}{0}

\section{Proofs}

\subsection{The proof of Theorem 1}
\label{proof1}
\begin{proof}
For a new node $v_{new}$, let $\mathcal{S}'$ be the subgraph after the addition of $v_{new}$. Based on the fact that the definition of Rayleigh quotient $RQ(\mathcal{S})=\frac{\sum_{(p, q) \in \mathcal{E}_{\mathcal{S}}} (x_p-x_q)^2}{\sum_{p \in \mathcal{S}} x_p^2}$, it need to satisfy the following condition in order to increase the Rayleigh quotient $RQ(\mathcal{S}') > RQ(\mathcal{S})$:
\begin{equation}
\label{eq:A1}
    \frac{\sum_{(p,q) \in \mathcal{E}} (x_p-x_q)^2 +\sum_{v_r \in \mathcal{S}}(x_{new}-x_r)^2}{\sum_{v_r \in \mathcal{S}}x_r^2 + x_{new}^2} > \frac{\sum_{(p,q) \in \mathcal{E}} (x_p-x_q)^2 }{\sum_{v_r \in \mathcal{S}}x_r^2},
\end{equation}
where $\sum_{v_r \in \mathcal{S}}(x_{new}-x_r)^2$ represents the sum of the feature difference between the new node $v_{new}$ and the connecting edge of the node $v$ in the subgraph $\mathcal{S}$. It is worth noting that these edges are present in the original graph $\mathcal{G}$.
Since both the numerator and denominator are positive numbers, the Eq. (\ref{eq:A1}) can be transformed into:
\begin{equation}
\label{eq:A2}
    \left[\sum_{(p,q) \in \mathcal{E}} (x_p-x_q)^2  +\sum_{p \in \mathcal{S}}(x_{new}-x_p)^2 \right] \sum_{v_r \in \mathcal{S}}x_r^2
    >\sum_{(p,q) \in \mathcal{E}} (x_p-x_q)^2 ( \sum_{v_r \in \mathcal{S}}x_r^2 + x_{new}^2) ,
\end{equation}
which can be obviously simplified to:
\begin{equation}
\label{eq:A3}
    \sum_{v_r \in \mathcal{S}}(x_{new}-x_r)^2   \sum_{v_r \in \mathcal{S}}x_r^2
    > x_{new}^2\sum_{(p,q) \in \mathcal{E}} (x_p-x_q)^2 ,
\end{equation}
We move the term with the new node to the same side of the equation and rearrange the Eq. (\ref{eq:A3}), and we obtain:
\begin{equation}
\label{eq:4}
 \frac{\sum_{ v_r \in \mathcal{S}} (x_{new}-x_r)^2}{x_{new}^2} > \frac{\sum_{(p, q) \in \mathcal{E}_{\mathcal{S}}} (x_p-x_q)^2}{\sum_{p \in \mathcal{S}} x_p^2}.
\end{equation}
Note that $RQ(\mathcal{S})=\frac{\sum_{(p, q) \in \mathcal{E}_{\mathcal{S}}} (x_p-x_q)^2}{\sum_{p \in \mathcal{S}} x_p^2}$ , we denote $\Delta(v_{new}) = \frac{\sum_{ v_r \in \mathcal{S}} (x_{new}-x_r)^2}{x_{new}^2}$ and we then obtain the Theorem \ref{theorem1} in Section \ref{MRQSampler}. \qedhere
\end{proof}

\subsection{The proof of Corollary 1}
\label{proof1c}
\begin{proof}
Similar to the proof of Theorem \ref{theorem1}, for a new nodeset $v_{{new}_i} \in \mathcal{V}_{new}$, let $\mathcal{S}'$ be the subgraph after the addition of $\mathcal{V}_{new}$ and it also needs to satisfy  $RQ(\mathcal{S}') > RQ(\mathcal{S})$, which is expanded as:
\begin{equation}
\label{eq:A5}
\begin{aligned}
\! \! \! \! \! \! \! \! \! 
\frac{\sum_{(p,q) \in \mathcal{E}} (x_p-x_q)^2 +\sum_{v_{{new}_i} \in \mathcal{V}_{{new}_i}}\sum_{v_r \in \mathcal{S}}(x_{{new}_i}-x_r)^2 + \sum_{(i,j) \in \mathcal{E}_{new}} (x_{{new}_i}-x_{{new}_j})^2}{\sum_{v_r \in \mathcal{S}}x_r^2 + \sum_{v_{{new}_i} \in \mathcal{V}_{new} }x_{{new}_i}^2} \\ > \frac{\sum_{ (p,q) \in \mathcal{E}} (x_p-x_q)^2 }{\sum_{ v_r \in \mathcal{S}}x_r^2},
\end{aligned}
\end{equation}
where $\sum_{v_{{new}_i} \in \mathcal{V}_{{new}_i}}\sum_{v_r \in \mathcal{S}}(x_{{new}_i}-x_r)^2$ represents the sum of the feature difference between the newly added nodeset $\mathcal{V}_{new}$ and the connecting edge of the subgraph $\mathcal{S}$. $\sum_{v_{{new}_i} \in \mathcal{V}_{new} }x_{{new}_i}^2$ represents the internal sum of the newly added nodeset $\mathcal{V}_{new}$. Similar to the proof of Theorem \ref{theorem1}, this formula can be transformed into:
\begin{equation}
\label{eq:A6}
\! \! \! \! \! \! \! \! \! \! \! \! 
\left[\sum_{v_{{new}_i} \in \mathcal{V}_{{new}_i}}\sum_{v_r \in \mathcal{S}}(x_{{new}_i}-x_r)^2  + \! \! \!\!\!  \sum_{(i,j) \in \mathcal{E}_{new}} (x_{{new}_i}-x_{{new}_j})^2\right] \sum_{ v_r \in \mathcal{S}}x_r^2  > \!\!\!\! \sum_{ (p,q) \in \mathcal{E}} (x_p-x_q)^2 \!\!\!\! { \sum_{v_{{new}_i} \in \mathcal{V}_{new} } \!\!\!\! x_{{new}_i}^2},
\end{equation}
Rearranging the Eq. (\ref{eq:A6}), we get:
\begin{equation}
\label{eq:A7}
 \frac{\sum_{ v_{new} \in \mathcal{V}_{new}} \sum_{ v_r \in \mathcal{S}} (x_{new}-x_r)^2+\sum_{(i, j) \in \mathcal{E}_{\mathcal{V}_{new}}} (x_{new_i}-x_{new_j})^2}{\sum_{ v_{new} \in \mathcal{V}_{new}} x_{new}^2} > \frac{\sum_{ (p,q) \in \mathcal{E}} (x_p-x_q)^2 }{\sum_{ v_r \in \mathcal{S}}x_r^2}.
\end{equation}
which is the same as Corollary \ref{corollary11} in Section \ref{MRQSampler}.

\end{proof}

\subsection{The proof of Theorem 2}
\label{proof2}
\begin{proof}
We define the nodeset $\mathcal{V}_{new}^*$ has the highest $\Delta_{max}(\mathcal{V}_{new})$ and $\Delta_{max}(\mathcal{V}_{new}) > RQ(\mathcal{S})$.
To prove that the $\mathcal{V}_{new}^*$ is contained in the optimal subgraph $\mathcal{S}^*$, we give the proof by contradiction. Assume that the negation of the statement is true, so there does not exist $\mathcal{V}_{new}^*$ in $\mathcal{S}^*$. We will discuss the issues based on two scenarios.

In the first scenario, we assume that the current subgraph $\mathcal{S}$ is already the optimal solution. According to Corollary \ref{corollary11}, we find that adding $\mathcal{V}_{new}^*$ can increase $RQ(S)$ since it satisfies the condition $\Delta_{max}(\mathcal{V}_{new}) > RQ(\mathcal{S})$. Therefore, it is obvious that the current set $\mathcal{S}$ is not the optimal solution.

In the other scenario, we assume that there is another nodeset $\mathcal{V}_{new}'$ ( $\mathcal{V}_{new}' \cap \mathcal{S}^* = \emptyset$), which together with the current subgraph $\mathcal{S}+\mathcal{V}_{new}'$ forms the optimal solution. According to the corollary \ref{corollary11}, we have
\begin{equation}
\label{eq:A8}
\Delta_{max}(\mathcal{V}_{new}^*) = \frac{\sum_{ \mathcal{V}_{new}^*} \sum_{  \mathcal{S}} (x_{new}^*-x_r)^2+\sum_{\mathcal{E}_{\mathcal{V}_{new}^*}} (x_{new_i}^*-x_{new_j}^*)^2}{\sum_{ \mathcal{V}_{new}^*} {x_{new}^*}^2},
\end{equation}
and it satisfies:
\begin{equation}
\label{eq:A9}
\begin{cases}
\Delta_{max}(\mathcal{V}_{new}^*)  > \frac{\sum_{ (p,q) \in \mathcal{E}} (x_p-x_q)^2 }{\sum_{ v_r \in \mathcal{S}}x_r^2}, \\
\Delta_{max}(\mathcal{V}_{new}^*)  >   \frac{\sum_{ \mathcal{V}_{new}'} \sum_{  \mathcal{S}} (x_{new}'-x_r)^2+\sum_{\mathcal{E}_{\mathcal{V}_{new}'}} (x_{new_i'}-x_{new_j'})^2}{\sum_{ \mathcal{V}_{new}'} {x_{new}'}^2}, \forall_{\mathcal{V}_{new}' \subseteq \mathcal{G}-\mathcal{S}}.
\end{cases}
\end{equation}
To continue with the proof, we present a useful inequality first.
\begin{lemma}[Dan's Favorite Inequality]
\label{lemma3}
Let $a_1, ..., a_n$ and $b_1, ..., b_n$ be positive numbers. Then
    \begin{equation}
        \min _i \frac{a_i}{b_i} \leq \frac{\sum_i a_i}{\sum_i b_i} \leq \max _i \frac{a_i}{b_i} .
    \end{equation}  
\end{lemma}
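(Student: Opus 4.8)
The plan is to prove the two inequalities simultaneously by passing from a termwise bound to the sum. First I would name the two quantities on the ends: set $m = \min_i (a_i/b_i)$ and $M = \max_i (a_i/b_i)$. By the very definition of minimum and maximum, for every index $i$ we have $m \le a_i/b_i \le M$. Since each $b_i$ is positive, multiplying this chain through by $b_i$ does not reverse any inequality, so $m\,b_i \le a_i \le M\,b_i$ holds for all $i$.

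Next I would sum the termwise bounds over $i = 1,\dots,n$, obtaining $m \sum_i b_i \le \sum_i a_i \le M \sum_i b_i$. Because all the $b_i$ are positive, $\sum_i b_i$ is a positive real number, so dividing the entire chain by $\sum_i b_i$ again preserves directions and yields $m \le \bigl(\sum_i a_i\bigr)\big/\bigl(\sum_i b_i\bigr) \le M$, which is precisely the asserted bound on the mediant.

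There is no real obstacle in this argument; the only point requiring attention is that the positivity hypothesis is invoked twice and for different reasons — once to multiply the termwise estimates by $b_i$ while keeping their orientation, and once to ensure $\sum_i b_i > 0$ so that the final division is both defined and orientation-preserving. As an alternative one could argue by induction on $n$, using the two-term case (the mediant $\tfrac{a_1+a_2}{b_1+b_2}$ lies between $\tfrac{a_1}{b_1}$ and $\tfrac{a_2}{b_2}$) as the inductive step, but the direct summation proof above is shorter and avoids the extra bookkeeping, so that is the route I would take.
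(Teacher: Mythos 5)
Your proof is correct and follows essentially the same route as the paper's: both bound each term via $a_i = b_i\,(a_i/b_i) \le b_i \max_j (a_j/b_j)$ (and the analogous lower bound), sum over $i$, and divide by the positive quantity $\sum_i b_i$. Your write-up is if anything slightly more careful about where positivity is used.
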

\begin{proof}
Here we give a classical proof, we have
\begin{equation}
\sum_i a_i=\sum_i b_i\left(\frac{a_i}{b_i}\right) \leq \sum_i b_i\left(\max _j \frac{a_j}{b_j}\right)=\left(\max _j \frac{a_j}{b_j}\right) \sum_i b_i ,
\end{equation}
So, 
\begin{equation}
\frac{\sum a_i}{\sum b_i} \leq \max _j \frac{a_j}{b_j},
\end{equation}
One can similarly prove
\begin{equation}
\frac{\sum a_i}{\sum b_i} \geq \min _j \frac{a_j}{b_j}.
\end{equation}
\end{proof}
Combining Lemma \ref{lemma3} and Eq. (\ref{eq:A9}), we obtain the following inequality.
\begin{equation}
\label{eq:A14}
\begin{aligned}
 \!  \!\!\!\!\!\!\!\!\!\!\!\!\!\!\!\! \!\!\!\!\!\! &\Delta_{max}(\mathcal{V}_{new}^*)  > \\
    &\frac{\sum_{ (p,q) \in \mathcal{E}} (x_p-x_q)^2 +\sum_{ \mathcal{V}_{new}'} \sum_{  \mathcal{S}} (x_{new}'-x_r)^2+\sum_{\mathcal{E}_{\mathcal{V}_{new}'}} (x_{new_i}'-x_{new_j}')^2}{\sum_{ v_r \in \mathcal{S}}x_r^2+\sum_{ \mathcal{V}_{new}'} {x_{new}'}^2}, \forall_{\mathcal{V}_{new}' \subseteq \mathcal{G}-\mathcal{S}}. 
\end{aligned}
\end{equation}
Analyzing the above equation reveals that the right side of the formula is $RQ(\mathcal{V}_{new}'+\mathcal{S})$. That is, for any $\mathcal{V}_{new}'$, adding $\mathcal{V}_{new}^*$ still makes $RQ(\mathcal{V}_{new}'+\mathcal{S})$ increasing according to the Corollary \ref{corollary11}, which contradicts the assumption that $RQ(\mathcal{V}_{new}'+\mathcal{S})$ is the optimal solution.

\begin{equation}
\label{eq:5}
\Delta_{max}(\mathcal{V}_{new}) = \max_{\mathcal{V}_{new} \subseteq \mathcal{G}-\mathcal{S}}\Delta(\mathcal{V}_{new}), \ \text{and} \
\Delta_{max}(\mathcal{V}_{new}) > RQ(\mathcal{S}).
\end{equation}

 However, identifying the maximum $\Delta_{max}(\mathcal{V}_{new})$ from the $\mathcal{V}_{new} \subseteq \mathcal{G}-\mathcal{S}$ is still a NP-hard problem. We consider relaxing any nodesets  $\mathcal{V}_{new}'$ to any \textbf{connected} nodesets $\mathcal{V}_{new}^{c}$. Any nodesets can be decomposed into several disconnected smaller nodesets, that is, $ \mathcal{V}_{new}' =\mathcal{V}_{new}^{c1} \cup \mathcal{V}_{new}^{c2}\cup \ldots $. Since there are no edges connecting these nodesets, the following decomposition formula can be derived. 

\begin{equation}
    \begin{cases}
        \sum_{ \mathcal{V}_{new}'} \sum_{  \mathcal{S}} (x_{new}'-x_r)^2 = \sum_{ \mathcal{V}_{new}^{c1}} \sum_{  \mathcal{S}} (x_{new}^{c1}-x_r)^2 + \sum_{ \mathcal{V}_{new}^{c2}} \sum_{  \mathcal{S}} (x_{new}^{c2}-x_r)^2+ \ldots, \\
        \sum_{\mathcal{E}_{\mathcal{V}_{new}'}} (x_{new_i}'-x_{new_j}')^2 =  \sum_{\mathcal{E}_{\mathcal{V}_{new}^{c1}}} (x_{new_i}^{c1}-x_{new_j}^{c1})^2 + \sum_{\mathcal{E}_{\mathcal{V}_{new}^{c2}}} (x_{new_i}^{c2}-x_{new_j}^{c2})^2 + \ldots, \\
        \sum_{ \mathcal{V}_{new}'} {x_{new}'}^2 =  \sum_{ \mathcal{V}_{new}^{c1}} {x_{new}^{c1}}^2 + \sum_{ \mathcal{V}_{new}^{c2}} {x_{new}^{c2}}^2 +\ldots.
    \end{cases}
\end{equation}
Considering the condition that maximizes the Rayleigh quotient of any connected $\mathcal{V}_{new}^{ci}$,
\begin{equation}
    \Delta_{max}(\mathcal{V}_{new}^*) > \Delta_{max}(\mathcal{V}_{new}^{ci}) = \frac{\sum_{ \mathcal{V}_{new}^{ci}} \sum_{  \mathcal{S}} (x_{new}^{ci}-x_r)^2 + \sum_{\mathcal{E}_{\mathcal{V}_{new}^{ci}}} (x_{new_i}^{ci}-x_{new_j}^{ci})^2}{\sum_{ \mathcal{V}_{new}^{ci}} {x_{new}^{ci}}^2}.
\end{equation}
According to Lemma \ref{lemma3}, Eq. (\ref{eq:A14}) is still satisfied. Therefore, we derive that $\mathcal{V}_{new}^*$ is contained in the optimal solution.

\end{proof}

\section{The Pseudocode of MRQSampler Algorithm}
\label{pseudocode}

We give the pseudocode of MRQSampler in Algorithm \ref{alg:sampling}, which illustrates the algorithm for finding the subgraph with the target node that maximizes the Rayleigh Quotient.
In section \ref{MRQSampler}, we give a diagram of the sampling range of 2-hop and 1-hop cases. 
For the completeness of the theory, we give the complete algorithm for arbitrary $k$-hop cases in the pseudocode form.

For node $r$, we focus on the $k$-hop spanning tree $T$ with $r$ as the root node. And for any node $v$ in $T$ except for the root $r$, $\Delta_{max}[v]$ is defined as:

\begin{equation}
    \Delta_{max}[v] := \max_{\mathcal{S} \subseteq \mathcal{T}_{v}} \frac{\left( x_{i} - x_{p} \right)^{2} + \sum_{(i,j)\subseteq \mathcal{E}_{\mathcal{S}}} \left(x_{i} - x_{j} \right)^{2} }{ \sum_{i\subseteq \mathcal{S}} x_{i}^{2} }.
\end{equation}
where $\mathcal{S} \subseteq \mathcal{T}_{v}$ are the connected subgraphs of the subtree $T_v$ with $v$ as the root node, and $p$ is the parent node of the node $v$.
As described in the Section \ref{MRQSampler}, we break the computation into two steps: 

\begin{itemize}
    \item \textbf{Stage 1}: Compute and store the maximum $\Delta_{max}[v]$ for subtrees rooted with each node $v$ except for the root $r$, which is performed by recursively calling the function \texttt{GetMaxDeltas} in Algorithm \ref{alg:sampling}. 
    \item \textbf{Stage 2}: Use these memorized results to compute the optimal Rayleigh Quotient $RQ$ and its corresponding subgraph, which is performed by the function \texttt{MRQSampler}.
\end{itemize}

In \textbf{Stage 1}, the first thing we need to know is how we get $\Delta_{max}[v]$.
Similar to the analysis of the Theorem \ref{corollary2}, we can also obtain the condition that the nodeset is in the final optimal subgraph with largest $\Delta_{max}[v]$:
\begin{equation}
\label{eq:new10}
\Delta_{max}[v_{new}] = \max_{\{\tilde{v}_{new}\}}\Delta(\tilde{\mathcal{V}}_{new}), \ \text{and} \
\Delta_{max}[v_{new}] > \Delta[v].
\end{equation}

This process is similar to finding the maximum $RQ$. In other words, we keep retrieving the un-selected descendants with maximum $\Delta_{max}[v_{new}]$, and then check whether its $\Delta_{max}[v_{new}]$ exceeds the current $\Delta[v]$. If it does, the inclusion of the optimal subgraph with it can increase the current $\Delta[v]$, otherwise, it can no longer be increased by adding any descendants and the maximum $\Delta_{max}[v]$ is reached.

\begin{algorithm}[H]
    \caption{MRQSampler}
        \label{alg:sampling}
    \begin{algorithmic}[1]
        \Statex \textbf{Globals:} $r:$ the original root of the tree; $i:$ an arbitrary node; $x[i]:$ the node $i$'s feature;  $T_r[i]:$ an array that stores the child nodes of node $i$ in the tree; $\delta[i]\leftarrow\{\Delta_{max}[i], a_i, b_i, N, I \}$: an array of structures that stores the maximum $\Delta_{max}[i]$ achievable by any connected subgraph $\mathcal{V}$ containing the node $i$ within the subtree rooted at $i$ and $a_i, b_i$ stores the numerator and denominator of $\Delta_{max}[i]$. $N$ is the optimal selected nodes, $I$ is the inferior candidates,
        \Statex
        \\
        \texttt{\# The function for computing $\Delta_{max}[i]$ in STAGE I}
        \State \textbf{Input:} $v \gets$ root of the current subtree; $p \gets$ parent of $v$
        \State \textbf{Output:} $\delta[i] \gets$ structure array with $\Delta_{max}[i]$ and correlated variables
        \Function{\texttt{GetMaxDeltas}}{$v, p$} 
        \Let{$N, I, U$}{$ \{ \}$} \Comment{Optimal selected nodes, Inferior candidates, Un-selected children of $v$}
        \Let{$Q$}{\texttt{SortedSet()}} \Comment{Candidates queue}
            \Let{$a_v$}{$ (x[v]-x[p])^{2} $} \Comment{Initialize the numerator of the maximum $\Delta_{max}[v]$}
        \Let{$b_v$}{$ x[v]^{2} $} \Comment{Initialize the denominator of the maximum $\Delta_{max}[v]$}
        \Let{$\Delta_{max}[v]$}{$ a_v / b_v $} \Comment{Initialize the maximum $\Delta_{max}[v]$ for current sub-tree}
        \For{$c \textrm{ in } T[v]$}
            \Let{$\delta[c]$}{\texttt{GetMaxDeltas}($c$, $v$)} \Comment{Result of the subtree rooted with child $c$}
            \State $Q$.\texttt{insert([$c, \delta[c]$])}
            \State $U$.\texttt{insert($c$)}
        \EndFor
        \While{$Q$.\texttt{size()} $\neq 0$}
            \Let{$c, \delta[c]$}{$Q$.\texttt{pop\_largest()}} \Comment{Retrieve the candidate $c$ with $\Delta_{max}[c]$ and structure $\delta[c]$ }
            \If{$ \Delta_{max}[v] > (a_{v} + \delta[c].a_{c}) / (b_{v} + \delta[c].b_{c}) $} \Comment{Optimization criterion of $\Delta_{max}[v]$}
                \State Break
            \EndIf
            \State $U$.\texttt{remove\_if\_exist($c$)}
            \Let{$a_{v}$}{$a_{v} + \delta[c].a_{c}$} \Comment{Update the maximum $\Delta_{max}[v]$}
            \Let{$b_{v}$}{$b_{v} + \delta[c].b_{c}$}
            \Let{$\Delta_{max}[v]$}{$a_v / b_v$} 
            \State $Q$.\texttt{insert($\delta[c].I$)} \Comment{Activate the inferior candidates}
            \State $N$.\texttt{insert($\delta[c].N$)}
        \EndWhile
        \Let{$I$}{$Q$} \Comment{The remaining candidates are the inferior ones}
        \State $I$.\texttt{insert($U$)} \Comment{Add the un-selected children to the inferior set}
        \Let{$\delta[v]$}{$\{\Delta_{max}[v], a_v, b_v, N, I \}$} \Comment{Memorise the results}
        \State \Return{$\delta[v]$}
        \EndFunction
        \\
        \\
        \texttt{\# The main function of MRQSampler in STAGE II}
    \State \textbf{Input:} $r \gets$ the original root of the tree (target sampling node)
    \State \textbf{Output:} $RQ_{max} \gets$ maximum $RQ$; $N \gets$ optimal sampling nodeset 
    \Function{\texttt{MRQSampler}}{$r$} 
        \Let{$a_{RQ}$}{$0$}  \Comment{Initialize the numerator of the $RQ_{max}$} 
        \Let{$b_{RQ}$}{$x[r]^{2}$}\Comment{Initialize the denominator of the $RQ_{max}$}
        \Let{$RQ_{max}$}{$a_{RQ}/b_{RQ}$}
        \Let{$Q$}{\texttt{SortedSet()}}
        \For{$c \textrm{ in } T[r]$}
            \Let{$\delta[c]$}{\texttt{GetMaxDeltas}(c, $r$)}\Comment{Recursively calculate the $\Delta_{max}$ in Stage 1}
            \State $Q$.\texttt{insert([$c, \delta[c]$])}
        \EndFor
        \While{$Q$.\texttt{size()} $\neq 0$}
            \Let{$c, \delta[c]$}{$Q$.\texttt{pop\_largest()}}
            \If{$ RQ_{max} > (a_{RQ} + \delta[c].a_{c}) / (b_{RQ} + \delta[c].b_{c}) $} \Comment{Optimization criterion of the RQ}
                \State Break
            \EndIf
            \Let{$a_{RQ}$}{$a_{RQ} + \delta[c].a_{c}$} \Comment{Update the result}
            \Let{$b_{RQ}$}{$b_{RQ} + \delta[c].b_{c}$}
            \Let{$RQ_{max}$}{$a_{RQ} / b_{RQ}$} 
            \State $Q$.\texttt{insert($\delta[c].I$)} \Comment{Activate the inferior candidates}
            \State $N$.\texttt{insert($\delta[c].N$)} \Comment{Update the selected nodeset}
        \EndWhile
        \State \Return{$\{RQ_{max}, N\}$}
    \EndFunction
    \end{algorithmic}
\end{algorithm}

For ease of computation, we store the optimal $\Delta_{max}[v]$ by its numerator $a_v$ and denominator $b_v$ (line 7-9). 
Next, we recursively calculate the result for each subtree rooted by every child $c$ of the current root $v$ (line 11).
To simplify complexity in the following steps, we store these results in a sorted container (e.g. binary search tree) $Q$ (line 12).
Next, we keep retrieving the subgraph with the highest $\Delta_{max}[c]$ from $Q$ and compute whether the $\Delta_{max}[c]$ increases after adding it to the current solution (line 15-17).
According to Eq. (\ref{eq:new10}), we obtain the optimal $\Delta_{max}[v]$ for the current subtree with root $v$ when the candidate cannot make $\Delta_{max}[v]$ larger.
Moreover, sets from subtrees that are not optimal for $v$ may still be selected at higher levels.
Therefore, we also need to keep track of those inferior subtrees and re-consider them when other subtrees that connect to them are merged into the solution (line 24-25).
Note that subtrees in $I$ are only considered as candidates when the optimal subgraph with root $v$ is selected at a higher layer (the ``activation'' in line 22). 

In \textbf{Stage 2}, the overall routine for obtaining $RQ_{max}$ is very similar to the one in \texttt{GetMaxDeltas}, except that the initial value is set to $RQ_{max} = \frac{0}{x[r]^2}$ since the root $r$ has no parent node. In other words, the algorithmic logic of the two functions in Stage 1 and Stage 2 is similar. Stage 2 can be regarded as a special case of Stage 1 without a parent node, utilizing the implementation of memoization from Stage 1.

Assuming that the k-hop spanning tree $T$ has $K$ nodes, the time complexity of Algorithm $\mathcal{O}(KlogK)$, since we will at worst examine each edge and sort them.
Notice that the computation is irrelevant between different nodes, it can be further accelerated by simultaneously processing multiple nodes.
In practice, we observe that the optimal choice of k-hop is typically <= 2, and thus the recursive computation can be unrolled thus further improving the efficiency.

\section{Implementation Details}
\label{expdetails}

\paragraph{Node-level Baselines.} \textbf{GCN} (Graph Convolutional Network \cite{GCN}) leverages convolution operations to propagate information from nodes to their neighbors. \textbf{SGC} (Simple Graph Convolution \cite{wu2019simplifying}) further simplifies GCN by removing non-linearities and collapsing weight matrices between consecutive layers to improve efficiency. \textbf{GIN} (Graph Isomorphism Network \cite{GIN}) captures graph structures by generating identical embeddings for structurally identical graphs, ensuring invariance to node label permutations. \textbf{GraphSAGE} (Graph Sample and AggregatE \cite{GraphSAGE}) generates node embeddings through sampling and aggregating features from local neighborhoods, supporting inductive learning. \textbf{GAT} (Graph Attention Networks \cite{GAT}) incorporates an attention mechanism to assign varying importance levels to different nodes during neighborhood aggregation, focusing on the most informative parts. \textbf{PNA} (Principle Neighbor Aggregation \cite{PNA}) combines multiple aggregators with degree-scalers for effective neighborhood aggregation. \textbf{AMNet} (Adaptive Multi-frequency Graph Neural Network \cite{AMNet}) captures both low and high-frequency signals by stacking multiple BernNets \cite{BernNet}, adaptively combining signals of different frequencies. \textbf{BWGNN} (Beta Wavelet Graph Neural Network \cite{BWGNN}) employs the Beta kernel to tackle higher frequency anomalies with flexible band-pass filters.

\paragraph{Graph-level Baselines.} \textbf{OCGIN} \cite{OCGIN} is a one-class graph-level anomaly detector based on a graph isomorphism network that addresses performance fluctuations in general graph classification methods. \textbf{OCGTL} \cite{OCGTL} combines deep one-class classification with graph transformation learning. \textbf{GlocalKD} learns rich global and local normal pattern information by joint distillation of graph and node representations. \textbf{iGAD} \cite{iGAD} employs an attribute-aware GNN and a substructure-aware deep random walk kernel to achieve dual-discriminative capability for anomalous attributes and substructures. \textbf{GmapAD} \cite{GmapAD} proposes an explainable graph mapping approach that projects graphs into a latent space for effective anomaly detection. \textbf{RQGNN} \cite{dong2023rayleigh} identifies differences in the spectral energy distributions between anomalous and normal graphs.

\paragraph{Multi-task Baselines.}  
\textbf{GraphPrompt} \cite{liu2023graphprompt} learns different task-specific prompt vectors for each task, which are added to the graph-level representations by element-wise multiplication. \textbf{All-in-One} \cite{sun2023all} treats an extra subgraph as a prompt and merges it with the original graph by cross links.

\paragraph{Hardware Specifications.} Our experiments were mainly carried out on a Linux server equipped with dual AMD EPYC 7763 64-core CPU processor, 256GB RAM, and an NVIDIA RTX 4090 GPU with 24GB memory. Some of the extremely large datasets, such as T-Finance, and certain memory-intensive baselines were implemented on the NVIDIA 8*A800 GPUs.
We mark the results as OOT (Out of Time) if the model training exceeds 2 days. For some large datasets, methods with GPU memory requirements exceeding 80GB were marked as OOM (Out of Memory), such as iGAD and GmapAD, and were attempted to be run on the CPU. iGAD was successfully completed, but GmapAD still encountered a timeout issue.

\paragraph{Metrics.} We utilize three widely used metrics to evaluate the performance of all methods: \textbf{F1-macro}, \textbf{AUROC} and \textbf{AUPRC}. \textbf{F1-macro} is the unweighted mean of the F1-scores for the two classes, disregarding the imbalance ratio between normal and anomaly labels. \textbf{AUROC} represents the area under the Receiver Operating Characteristic Curve. \textbf{AUPRC} represents the area under the Precision-Recall Curve, emphasizing model performance on imbalanced datasets by focusing on the trade-off between precision and recall.

\paragraph{Hyperparameter Tuning.}
As described in Section \ref{unimodel}, we first use an unsupervised model based on GraphMAE \cite{GraphMAE} to learn the general representation of the input features. The hyperparameters for this step are set to the default values from the official GraphMAE implementation, with 50 training epochs. Table \ref{tab:hparameters} lists all the hyperparameters used in our model along with their corresponding search spaces. During training, we conduct a \textbf{grid search} to identify the model that achieves the highest AUROC score on the validation set. Finally, we evaluate the selected model on the test sets and report the performance metrics. 
\begin{table}[ht]
    \centering
\caption{Hyperparameters search space for UniGAD. }\label{tab:hparameters}
\begin{tabular}{lll}
\toprule
& \textbf{Hyperparameter} & \textbf{Distribution} \\
\midrule
& learning rate             & Range$(5^{-4}, 10^{-2})$ \\
& activation                &  [ReLU, LeakyReLU, Tanh] \\
& hidden dimension          & [16,32,64] \\
& MRQSampler tree depth     & [1,2] \\
& GraphStitch Network layer & [1,2,3]$\times$2   \\
& epochs                    & [100, 200, 300, 400, 500] \\
\midrule
\bottomrule
\end{tabular}
\end{table}

\section{Limitations and Impacts}
\label{lim}
Since the GNN encoder we employ mainly focuses on node-level features, the learned representations may not be perfectly suited for edge and graph level tasks.
Therefore, we leave the exploration of how to integrate multiple tasks in the pre-training phase to future work.
As a generalized graph anomaly detection model, our work will be helpful in detecting classical graph anomaly applications, such as financial fraud, cybercrime, etc. 
On the other hand, an error in the recognition result may put normal groups or behaviors into anomalies, causing disturbance for the normal users in the graph network.

\section{Additional Experimental Results}
\label{addsesult}
We also provide the results of all experiments under the F1-macro and AUPRC evaluation metrics. Similar to the arrangement in the main text, for F1-macro, we show the results of multi-level performance comparison under F1-macro metric in Table \ref{tab:unifiedNE-A} and Table \ref{tab:unifiedNG-Aa}. The results of Zero-Shot Comparison under F1-macro metric are in Table \ref{transNE-A} and Table \ref{transNG-A}. For AUPRC, we show the results of multi-level performance comparison under AUPRC metric in Table \ref{tab:unifiedNE-A-PRC} and Table \ref{tab:unifiedNG-Aa-PRC}. The results of Zero-Shot Comparison under AUPRC metric are in Table \ref{transNE-A-PRC} and Table \ref{transNG-A-PRC}. It can be observed from these tables that similar conclusions can be drawn as with the AUROC results in Section \ref{exp}. UniGAD demonstrates superior performance across most datasets, regardless of whether unified or zero-shot performance is evaluated.

\begin{table}[htp]
    \caption{Comparison of unified performance (F1-macro) at both node and edge levels with different single-level methods, multi-task methods, and our proposed method.}
    \label{tab:unifiedNE-A}
    \centering
    \resizebox{\textwidth}{!}{
    \begin{tabular}{c|c|cc|cc|cc|cc|cc|cc|cc}
    \toprule
    \multirow{2}{*}{ } & \bf{Dataset} & \multicolumn{2}{c|}{\bf{Reddit}} & \multicolumn{2}{c|}{\bf{Weibo}} & \multicolumn{2}{c|}{\bf{Amazon}} & \multicolumn{2}{c|}{\bf{Yelp}} & \multicolumn{2}{c|}{\bf{Tolokers}}& \multicolumn{2}{c|}{\bf{Questions}} & \multicolumn{2}{c}{\bf{T-Finance}} \\ 
     & \bf{Task-level} & \bf{Node} & \bf{Edge}  & \bf{Node} & \bf{Edge} & \bf{Node} & \bf{Edge}  & \bf{Node} & \bf{Edge}  & \bf{Node} & \bf{Edge}  & \bf{Node} & \bf{Edge} & \bf{Node} & \bf{Edge} \\ \midrule
    \multirow{9}{*}{Node-level} 
    & GCN       & $38.81$ & / & $92.90$ & / & $62.45$ & / & $42.84$ & / & $58.64$ & / & $48.74$ & /& $70.61$ & / \\
    & GIN       & $27.22$ & / & $92.01$ & / & $72.21$ & / & $58.78$ & / & $59.40$ & / & $49.95$ & /& $76.81$ & / \\
    & GraphSAGE & $28.30$ & / & $85.05$ & / & $64.05$ & / & $64.20$ & / & $63.71$ & / & $51.07$ & / & $62.63$ & /\\
    & SGC       & $12.42$ & / & $90.49$ & / & $57.43$ & / & $47.26$ & / & $54.48$ & / & $48.59$ & /& $56.69$ & / \\
    & GAT       & $31.28$ & / & $92.46$ & / & $87.96$ & / & $57.56$ & / & $62.68$ & / & $47.28$ & / & $69.11$ & /\\
    & BernNet   & $46.27$ & / & $89.59$ & / & $89.16$ & / & $61.42$ & / & $58.73$ & / & $47.59$ & /& $70.52$ & / \\
    & PNA       & $14.08$ & / & $88.93$ & / & $61.84$ & / & $52.52$ & / & $58.49$ & / & $46.38$ & /& $27.69$ & / \\
    & AMNet     & $45.14$ & / & $89.04$ & / & $89.67$ & / & $58.75$ & / & $58.47$ & / & $49.90$ & /& $74.31$ & / \\
    & BWGNN     & $42.25$ & / & $86.94$ & / & $90.49$ & / & $64.89$ & / & $63.43$ & / & $52.09$ & / & $80.57$ & /\\ \midrule
    \multirow{9}{*}{Edge-level} 
    & GCNE   & / & \multicolumn{1}{l|}{$42.97$} & / & \multicolumn{1}{l|}{$92.32$} & / & \multicolumn{1}{l|}{$47.99$} & / & \multicolumn{1}{l|}{$42.32$} & / & \multicolumn{1}{l|}{$63.15$} & / & \multicolumn{1}{l|}{$58.40$} & / & \multicolumn{1}{l}{$79.07$} \\
    & GINE   & / & \multicolumn{1}{l|}{$36.89$} & / & \multicolumn{1}{l|}{$91.54$} & / & \multicolumn{1}{l|}{$47.82$} & / & \multicolumn{1}{l|}{$53.14$} & / & \multicolumn{1}{l|}{$60.54$} & / & \multicolumn{1}{l|}{$56.96$} & / & \multicolumn{1}{l}{$73.40$} \\
    & SAGEE  & / & \multicolumn{1}{l|}{$11.94$} & / & \multicolumn{1}{l|}{$89.59$} & / & \multicolumn{1}{l|}{$57.79$} & / & \multicolumn{1}{l|}{$57.87$} & / & \multicolumn{1}{l|}{$65.95$} & / & \multicolumn{1}{l|}{$\bf{74.15}$} & / & \multicolumn{1}{l}{$67.12$} \\
    & SGCE   & / & \multicolumn{1}{l|}{$41.08$} & / & \multicolumn{1}{l|}{$86.70$} & / & \multicolumn{1}{l|}{$55.97$} & / & \multicolumn{1}{l|}{$45.13$} & / & \multicolumn{1}{l|}{$56.50$} & / & \multicolumn{1}{l|}{$52.07$} & / & \multicolumn{1}{l}{$64.51$} \\
    & GATE   & / & \multicolumn{1}{l|}{$40.65$} & / & \multicolumn{1}{l|}{$90.53$} & / & \multicolumn{1}{l|}{$64.69$} & / & \multicolumn{1}{l|}{$49.99$} & / & \multicolumn{1}{l|}{$61.43$} & / & \multicolumn{1}{l|}{$62.80$} & / & \multicolumn{1}{l}{$68.75$} \\
    & BernE  & / & \multicolumn{1}{l|}{$39.85$} & / & \multicolumn{1}{l|}{$92.15$} & / & \multicolumn{1}{l|}{$69.12$} & / & \multicolumn{1}{l|}{$59.76$} & / & \multicolumn{1}{l|}{$61.53$} & / & \multicolumn{1}{l|}{$67.32$}  & / & \multicolumn{1}{l}{$63.16$}\\
    & PNAE   & / & \multicolumn{1}{l|}{$23.03$} & / & \multicolumn{1}{l|}{$92.30$} & / & \multicolumn{1}{l|}{$49.27$} & / & \multicolumn{1}{l|}{$52.94$} & / & \multicolumn{1}{l|}{$64.98$} & / & \multicolumn{1}{l|}{$65.39$} & / & \multicolumn{1}{l}{$65.74$} \\
    & AME    & / & \multicolumn{1}{l|}{$41.11$} & / & \multicolumn{1}{l|}{$87.04$} & / & \multicolumn{1}{l|}{$66.27$} & / & \multicolumn{1}{l|}{$57.09$} & / & \multicolumn{1}{l|}{$61.42$} & / & \multicolumn{1}{l|}{$66.74$}  & / & \multicolumn{1}{l}{$57.45$}\\
    & BWE    & / & \multicolumn{1}{l|}{$45.36$} & / & \multicolumn{1}{l|}{$91.72$} & / & \multicolumn{1}{l|}{$67.56$} & / & \multicolumn{1}{l|}{$59.30$} & / & \multicolumn{1}{l|}{$65.09$} & / & \multicolumn{1}{l|}{$66.28$} & / & \multicolumn{1}{l}{$70.88$} \\ 
     \midrule
     \multirow{2}{*}{Multi-task} & GraphPrompt-U &31.23  &38.54  & 50.64 &46.53  & 40.93 & 35.95 & 40.90 & 42.94 & 48.26  & 48.34 & 39.43 &44.61 & OOT & OOT\\
        & All-in-One-U &49.12  &2.41  &51.23  & 48.65 & 48.67 & 2.45 & 14.43	 & 46.29 &50.17   &47.90  &48.81  & 33.29 & OOT & OOT\\ 
     \midrule
    \multirow{2}{*}{\begin{tabular}[c]{@{}c@{}}UniGAD\\ (Ours)\end{tabular}} 
    & UniGAD - GCN  & \multicolumn{1}{l}{$\bf{56.70}$} & \multicolumn{1}{l|}{$\bf{53.80}$} & \multicolumn{1}{l}{$\bf{95.75}$} & \multicolumn{1}{l|}{$\bf{94.29}$} & \multicolumn{1}{l}{$69.39$} & \multicolumn{1}{l|}{$59.12$} & \multicolumn{1}{l}{$58.23$} & \multicolumn{1}{l|}{$56.76$} & \multicolumn{1}{l}{$65.20$} & \multicolumn{1}{l|}{$64.55$} & $58.06$ & $57.77$  & $84.92$ & $84.08$  \\
    &  UniGAD - BWG & \multicolumn{1}{l}{$54.08$} & \multicolumn{1}{l|}{$51.44$} & \multicolumn{1}{l}{$95.35$} & \multicolumn{1}{l|}{$94.22$} & \multicolumn{1}{l}{$\bf{91.33}$} & \multicolumn{1}{l|}{$\bf{73.59}$} & \multicolumn{1}{l}{$\bf{70.16}$} & \multicolumn{1}{l}{$\bf{63.57}$} & $\bf{68.15}$ & \multicolumn{1}{l|}{$\bf{66.20}$} & $\bf{59.45}$ & $57.54$  & $\bf{89.75}$ & $\bf{84.90}$ \\ 
    
    \bottomrule
    \end{tabular}
    }
    \end{table}

\begin{table}[htp]
        \caption{Comparison of unified performance (F1-macro) at both node and graph levels with different single-level methods, multi-task methods, and our proposed method.}
        \label{tab:unifiedNG-Aa}
        \centering
        \resizebox{\textwidth}{!}{
        \begin{tabular}{c|c|cc|cc|cc|cc|cc|cc|cc}
        \toprule
        \multirow{2}{*}{ } & \bf{Dataset} & \multicolumn{2}{c|}{\bf{BM-MN}} & \multicolumn{2}{c|}{\bf{BM-MS}} & \multicolumn{2}{c|}{\bf{BM-MT}} & \multicolumn{2}{c|}{\bf{MUTAG}} & \multicolumn{2}{c|}{\bf{MNIST0}} & \multicolumn{2}{c|}{\bf{MNIST1}}  & \multicolumn{2}{c}{\bf{T-Group}} \\ 
         & \bf{Task-level} & \bf{Node} & \bf{Graph}  & \bf{Node} & \bf{Graph} & \bf{Node} & \bf{Graph}  & \bf{Node} & \bf{Graph}  & \bf{Node} & \bf{Graph}  & \bf{Node} & \bf{Graph}  & \bf{Node} & \bf{Graph} \\ 
        \midrule
        \multirow{9}{*}{Node-level} 
        & GCN       & $68.25$ & / & $77.77$ & / & $69.72$ & / & $90.41$ & / & $92.03$  & / & $91.95$  & / & $49.50$ &  /  \\
        & GIN       & $32.96$ & / & $24.25$ & / & $25.69$ & / & $92.33$ & / & $88.88$  & / & $88.04$  & / & $49.24$ &  / \\
        & GraphSAGE & $32.96$ & / & $24.25$ & / & $25.69$ & / & $88.87$ & / & $\bf{99.99}$ & / & $\bf{99.99}$ & / & $50.77$ &  / \\
        & SGC       & $32.96$ & / & $24.33$ & / & $25.72$ & / & $54.95$ & / & $49.04$  & / & $82.70$  & / & $49.04$ &  / \\
        & GAT       & $32.96$ & / & $24.25$ & / & $25.69$ & / & $92.07$ & / & $99.94$  & / & $99.96$  & / & $50.29$ &  / \\
        & BernNet   & $35.04$ & / & $51.71$ & / & $30.26$ & / & $86.76$ & / & $\bf{99.99}$ & / & $\bf{99.99}$ & / & $52.84$ &  / \\
        & PNA       & $32.96$ & / & $24.25$ & / & $25.69$ & / & $87.49$ & / & $98.83$  & / & $99.27$  & / & $49.88$ &  / \\
        & BWGNN     & $82.48$ & / & $75.22$ & / & $76.19$ & / & $92.75$ & / & $\bf{99.99}$ & / & $\bf{99.99}$ & / & $51.81$ &  / \\ \midrule
        \multirow{6}{*}{Graph-level} 
        & OCGIN     & / & $46.15$ & / & $46.15$ & / & $46.11$  & / & $39.78$ & / & $47.79$ & / & $49.74$ & / & $48.91$   \\
        & OCGTL     & / & $46.15$ & / & $46.15$ & / & $46.15$  & / & $39.62$ & / & $47.41$ & / & $47.02$ & / & $48.91$   \\
        & GLocalKD  & / & $12.50$ & / & $12.50$ & / & $12.50$  & / & $25.59$ & / &  $8.98$ & / & $10.11$ & / &  $4.09$   \\
        & iGAD      & / & $68.29$ & / & $81.59$ & / & $89.89$  & / & $89.78$ & / & $87.73$ & / & $95.04$ & / & $46.51$   \\
        & GmapAD   & / & $46.15$ & / & $46.15$ & / & $46.15$  & / & $75.48$ & / & OOM & / & OOM & / & OOM   \\
        & RQGNN     & / & $\bf{95.46}$ & / & $93.02$ & / & $97.56$  & / & $89.39$ & / & $93.42$ & / & $96.99$ & / & $48.91$   \\
         \midrule
        \multirow{2}{*}{Multi-task} 
        & GraphPrompt-U & $36.95$ & $45.55$  & $37.46$  & $12.86$  & $47.92$ & $46.01$  & $83.08$ & $45.70$ &80.66&52.39 & 80.49	 & 28.25 & 50.77 & 49.78\\
        & All-in-One-U  & $34.98$ & $12.86$  & $21.24$  & $20.38$  & $41.36$ & $12.86$  & $38.71$ & $25.71$ & OOT & OOT & OOT & OOT & OOT & OOT\\ 
         \midrule
        \multirow{2}{*}{\begin{tabular}[c]{@{}c@{}}UniGAD\\ (Ours)\end{tabular}} 
        & UniGAD - GCN  & $\bf{99.20}$ & $83.62$ & $\bf{99.57}$ & $\bf{95.86}$ & $\bf{96.18}$ & $70.50$ & $\bf{93.33}$ & $\bf{90.00}$ & $92.17$ & $93.38$ & $92.49$  & $97.23$ & $64.98$ & $77.04$ \\
        & UniGAD - BWG  & $87.91$ & $55.89$ & $83.79$ & $61.67$ & $82.53$ & $51.36$ & $93.07$ & $89.19$ & $\bf{99.99}$ & $\bf{95.54}$ & $\bf{99.99}$ & $\bf{97.60}$ & $\bf{68.69}$ & $\bf{78.09}$ \\ 
        
        \bottomrule
        \end{tabular}
        }
    \end{table}

\begin{table}[ht!]
    \caption{Zero-shot transferability (F1-macro) at node and edge levels.}
    \label{transNE-A}
    \centering
    \resizebox{\textwidth}{!}{
    \begin{tabular}{c|cc|cc|cc|cc|cc|cc|cc}
    \toprule
    \multirow{2}{*}{ \bf{Methods} }  & \multicolumn{2}{c|}{\bf{Reddit}} & \multicolumn{2}{c|}{\bf{Weibo}} & \multicolumn{2}{c|}{\bf{Amazon}} & \multicolumn{2}{c|}{\bf{Yelp}} & \multicolumn{2}{c|}{\bf{Tolokers}} & \multicolumn{2}{c|}{\bf{Questions}} & \multicolumn{2}{c}{\bf{T-Finance}}\\ 
      & \bf{N$\rightarrow$E} & \bf{E$\rightarrow$N}  & \bf{N$\rightarrow$E} & \bf{E$\rightarrow$N} & \bf{N$\rightarrow$E} & \bf{E$\rightarrow$N}  & \bf{N$\rightarrow$E} & \bf{E$\rightarrow$N}  & \bf{N$\rightarrow$E} & \bf{E$\rightarrow$N}  & \bf{N$\rightarrow$E} & \bf{E$\rightarrow$N}  & \bf{N$\rightarrow$E} & \bf{E$\rightarrow$N}  \\ \midrule
     GraphPrompt-U & $29.87$ & $41.00$  & $50.48$  & $47.06$ & 42.19	& 33.22 &44.77& 41.44 & $47.77$ & $43.72$  & $47.63$  & $39.00$ & OOT & OOT\\
    All-in-One-U  & $2.38$  & $49.17$  & $36.82$  & $51.58$ & 12.84			 & 23.6 &12.22 & 46.04 & $33.39$ & $49.83$  & $33.08$  & $49.54$  & OOT & OOT\\ 
     \midrule
     UniGAD - GCN & \multicolumn{1}{l}{$\bf{50.61}$} & \multicolumn{1}{l|}{$\bf{50.58}$} & \multicolumn{1}{l}{$\bf{94.44}$} & \multicolumn{1}{l|}{$\bf{94.29}$} & \multicolumn{1}{l}{$56.70$} & \multicolumn{1}{l|}{$61.88$} & \multicolumn{1}{l}{$51.29$} & \multicolumn{1}{l|}{$53.36$} & \multicolumn{1}{l}{$61.14$} & \multicolumn{1}{l|}{$57.14$} & $52.01$ & $52.26$ & $80.85$ & $69.21$ \\
     UniGAD - BWG & \multicolumn{1}{l}{$49.79$} & \multicolumn{1}{l|}{$50.04$} & \multicolumn{1}{l}{$92.11$} & \multicolumn{1}{l|}{$93.62$}  & \multicolumn{1}{l}{$\bf{68.47}$} & \multicolumn{1}{l|}{$\bf{85.07}$} & \multicolumn{1}{l}{$\bf{63.69}$} & \multicolumn{1}{l|}{$\bf{71.10}$} & \multicolumn{1}{l}{$\bf{65.46}$} & $\bf{65.57}$ & \multicolumn{1}{l}{$\bf{55.81}$} & $\bf{53.44}$ & $\bf{83.35}$ & $\bf{87.61}$  \\

    \bottomrule
    \end{tabular}
    }
    \end{table}

\begin{table}[ht!]
    \caption{Zero-shot transferability (F1-macro) at node and graph levels.}
    \label{transNG-A}
    \centering
    \resizebox{\textwidth}{!}{
    \begin{tabular}{c|cc|cc|cc|cc|cc|cc}
    \toprule
    \multirow{2}{*}{  \bf{Methods} }& \multicolumn{2}{c|}{\bf{BM-MN}} & \multicolumn{2}{c|}{\bf{BM-MS}} & \multicolumn{2}{c|}{\bf{BM-MT}} & \multicolumn{2}{c|}{\bf{MUTAG}} & \multicolumn{2}{c|}{\bf{MNIST0}}  & \multicolumn{2}{c}{\bf{T-Group}} \\ 
     &  \bf{N$\rightarrow$G} & \bf{G$\rightarrow$N}  & \bf{N$\rightarrow$G} & \bf{G$\rightarrow$N} & \bf{N$\rightarrow$G} & \bf{G$\rightarrow$N}  & \bf{N$\rightarrow$G} & \bf{G$\rightarrow$N}  & \bf{N$\rightarrow$G} & \bf{G$\rightarrow$N}  & \bf{N$\rightarrow$G} & \bf{G$\rightarrow$N}   \\ \midrule
     GraphPrompt-U & $12.86$  & $34.98$ & $20.59$ & $42.78$ & $46.01$ & $43.85$ & $42.15$  & $27.73$  & 26.16		 & 26.75 & 48.47	  & 43.64\\
      All-in-One-U & $12.86$  & $34.98$ & $46.01$ & $41.25$ & $12.86$ & $22.74$ & $39.53$  & $48.80$  & OOT & OOT & OOT  & OOT\\ 
     \midrule
    UniGAD - GCN & \multicolumn{1}{l}{$\bf{53.43}$} & \multicolumn{1}{l|}{$\bf{83.88}$} & \multicolumn{1}{l}{$\bf{57.84}$} & \multicolumn{1}{l|}{$\bf{74.63}$} & \multicolumn{1}{l}{$\bf{52.78}$} & \multicolumn{1}{l|}{$\bf{54.87}$} & \multicolumn{1}{l}{$39.89$} & \multicolumn{1}{l|}{$\bf{77.47}$} & \multicolumn{1}{l}{$\bf{65.78}$} & \multicolumn{1}{l|}{$\bf{63.75}$}  & $\bf{66.47}$ & $\bf{49.22}$\\
    UniGAD - BWG & \multicolumn{1}{l}{$46.15$} & \multicolumn{1}{l|}{$59.27$} & \multicolumn{1}{l}{$46.15$} & \multicolumn{1}{l|}{$52.27$} & \multicolumn{1}{l}{$46.22$} & \multicolumn{1}{l|}{$39.54$} & \multicolumn{1}{l}{$\bf{45.58}$} & \multicolumn{1}{l|}{$66.32$} & \multicolumn{1}{l}{$44.56$} & \multicolumn{1}{l|}{$60.84$}  &\multicolumn{1}{l}{$63.92$} & $48.24$ \\ 
    \bottomrule
    \end{tabular}
    }
    \end{table}

\begin{table}[h!]
    \caption{Comparison of unified performance (AUPRC) at both node and edge levels with different single-level methods, multi-task methods, and our proposed method.}
    \label{tab:unifiedNE-A-PRC}
    \centering
    \resizebox{\textwidth}{!}{
    \begin{tabular}{c|c|cc|cc|cc|cc|cc|cc|cc}
    \toprule
    \multirow{2}{*}{ } & \bf{Dataset} & \multicolumn{2}{c|}{\bf{Reddit}} & \multicolumn{2}{c|}{\bf{Weibo}} & \multicolumn{2}{c|}{\bf{Amazon}} & \multicolumn{2}{c|}{\bf{Yelp}} & \multicolumn{2}{c|}{\bf{Tolokers}} & \multicolumn{2}{c|}{\bf{Questions}} & \multicolumn{2}{c}{\bf{T-Finance}} \\ 
     & \bf{Task-level} & \bf{Node} & \bf{Edge}  & \bf{Node} & \bf{Edge} & \bf{Node} & \bf{Edge}  & \bf{Node} & \bf{Edge}  & \bf{Node} & \bf{Edge}  & \bf{Node} & \bf{Edge} & \bf{Node} & \bf{Edge}  \\ \midrule
    \multirow{9}{*}{Node-level} 
    & GCN       & $5.84$ & / & $94.55$ & / & $36.21$ & / & $20.58$ & / & $43.68$ & / & $12.20$ & / & $70.81$ & /\\
    & GIN       & $5.89$ & / & $91.28$ & / & $75.74$ & / & $33.09$ & / & $39.71$ & / & $12.79$ & / & $61.79$ & /\\
    & GraphSAGE & $5.44$ & / & $84.97$ & / & $55.38$ & / & $45.27$ & / & $48.90$ & / & $16.72$ & / & $19.62$ & /\\
    & SGC       & $4.27$ & / & $90.70$ & / & $33.48$ & / & $16.13$ & / & $36.90$ & / & $ 9.90$ & / & $30.35$ & /\\
    & GAT       & $6.15$ & / & $90.18$ & / & $85.61$ & / & $37.51$ & / & $46.01$ & / & $15.82$ & / & $54.70$ & /\\
    & BernNet   & $7.34$ & / & $88.39$ & / & $87.08$ & / & $46.22$ & / & $43.04$ & / & $15.11$ & / & $67.65$ & /\\
    & PNA       & $5.71$ & / & $93.13$ & / & $37.28$ & / & $27.34$ & / & $42.98$ & / & $11.57$ & / & $23.07$ & /\\
    & AMNet     & $7.52$ & / & $88.59$ & / & $87.26$ & / & $46.18$ & / & $43.22$ & / & $14.39$ & / & $74.73$ & /\\
    & BWGNN     & $6.41$ & / & $92.81$ & / & $\bf{88.57}$ & / & $50.32$ & / & $49.44$ & / & $16.21$ & / & $84.94$ & /\\ \midrule
    \multirow{9}{*}{Edge-level} 
    & GCNE   & / & \multicolumn{1}{l|}{$4.68$} & / & \multicolumn{1}{l|}{$94.56$} & / & \multicolumn{1}{l|}{$16.59$} & / & \multicolumn{1}{l|}{$22.68$} & / & \multicolumn{1}{l|}{$55.15$} & / & \multicolumn{1}{l|}{$27.83$}  & / & $62.12$  \\
    & GINE   & / & \multicolumn{1}{l|}{$5.01$} & / & \multicolumn{1}{l|}{$91.84$} & / & \multicolumn{1}{l|}{$25.08$} & / & \multicolumn{1}{l|}{$28.28$} & / & \multicolumn{1}{l|}{$46.03$} & / & \multicolumn{1}{l|}{$27.34$} & / & $52.01$ \\
    & SAGEE  & / & \multicolumn{1}{l|}{$5.31$} & / & \multicolumn{1}{l|}{$91.07$} & / & \multicolumn{1}{l|}{$20.28$} & / & \multicolumn{1}{l|}{$34.22$} & / & \multicolumn{1}{l|}{$\bf{60.44}$} & / & \multicolumn{1}{l|}{$\bf{50.49}$}& / & $18.79$ \\
    & SGCE   & / & \multicolumn{1}{l|}{$3.00$} & / & \multicolumn{1}{l|}{$89.04$} & / & \multicolumn{1}{l|}{$14.54$} & / & \multicolumn{1}{l|}{$17.37$} & / & \multicolumn{1}{l|}{$51.41$} & / & \multicolumn{1}{l|}{$17.84$} & / & $30.22$\\
    & GATE   & / & \multicolumn{1}{l|}{$5.32$} & / & \multicolumn{1}{l|}{$86.61$} & / & \multicolumn{1}{l|}{$40.30$} & / & \multicolumn{1}{l|}{$31.96$} & / & \multicolumn{1}{l|}{$51.03$} & / & \multicolumn{1}{l|}{$32.70$}& / & $33.47$ \\
    & BernE  & / & \multicolumn{1}{l|}{$4.89$} & / & \multicolumn{1}{l|}{$91.34$} & / & \multicolumn{1}{l|}{$39.83$} & / & \multicolumn{1}{l|}{$32.70$} & / & \multicolumn{1}{l|}{$55.19$} & / & \multicolumn{1}{l|}{$41.52$} & / & $45.01$\\
    & PNAE   & / & \multicolumn{1}{l|}{$4.51$} & / & \multicolumn{1}{l|}{$95.24$} & / & \multicolumn{1}{l|}{$16.03$} & / & \multicolumn{1}{l|}{$28.28$} & / & \multicolumn{1}{l|}{$57.46$} & / & \multicolumn{1}{l|}{$37.61$} & / & $54.13$\\
    & AME    & / & \multicolumn{1}{l|}{$5.00$} & / & \multicolumn{1}{l|}{$87.03$} & / & \multicolumn{1}{l|}{$39.07$} & / & \multicolumn{1}{l|}{$32.46$} & / & \multicolumn{1}{l|}{$53.53$} & / & \multicolumn{1}{l|}{$40.88$} & / & $43.70$\\
    & BWE    & / & \multicolumn{1}{l|}{$5.26$} & / & \multicolumn{1}{l|}{$93.08$} & / & \multicolumn{1}{l|}{$38.83$} & / & \multicolumn{1}{l|}{$35.33$} & / & \multicolumn{1}{l|}{$58.42$} & / & \multicolumn{1}{l|}{$42.72$} & / & $68.13$\\ 
     \midrule
     \multirow{2}{*}{Multi-task} & GraphPrompt-U & $3.60$ & $2.92$ & $17.22$ & $7.31$ & $6.62$ & $2.64$ & $12.41$ & $13.63$ & $22.19$ & $33.52$ & $3.25$ & $5.22$ & OOT& OOT\\
                                  & All-in-One-U & $4.07$ & $2.93$ & $6.41$ & $5.18$ & $1.02$ & $3.13$ & $46.10$ & $13.49$ & $21.64$ & $32.16$ & $2.57$ & $4.09$ & OOT & OOT \\ 
     \midrule
    \multirow{2}{*}{\begin{tabular}[c]{@{}c@{}}UniGAD\\ (Ours)\end{tabular}} 
    & UniGAD - GCN  & \multicolumn{1}{l}{$\bf{9.73}$} & \multicolumn{1}{l|}{$\bf{5.82}$} & \multicolumn{1}{l}{$\bf{96.79}$} & \multicolumn{1}{l|}{$\bf{95.65}$} & \multicolumn{1}{l}{$38.06$} & \multicolumn{1}{l|}{$15.53$} & \multicolumn{1}{l}{$\bf{61.00}$} & \multicolumn{1}{l|}{$\bf{40.90}$} & \multicolumn{1}{l}{$46.38$} & \multicolumn{1}{l|}{$54.05$} & $15.58$ & $15.96$ & $75.30$ & $69.90$ \\
    &  UniGAD - BWG & \multicolumn{1}{l}{$5.19$} & \multicolumn{1}{l|}{$3.29$} & \multicolumn{1}{l}{$96.54$} & \multicolumn{1}{l|}{$93.66$} & \multicolumn{1}{l}{$87.28$} & \multicolumn{1}{l|}{$\bf{42.01}$} & \multicolumn{1}{l}{$27.42$} & \multicolumn{1}{l}{$24.65$} & $\bf{50.80}$ & \multicolumn{1}{l|}{$56.89$} & $\bf{17.35}$ & $19.34$ & $\bf{85.34}$ & $\bf{74.37}$\\ 
    
    \bottomrule
    \end{tabular}
    }
    \end{table}

\begin{table}[h!]
    \caption{Comparison of unified performance (AUPRC) at both node and graph levels with different single-level methods, multi-task methods, and our proposed method.}
    \label{tab:unifiedNG-Aa-PRC}
    \centering
        \resizebox{\textwidth}{!}{
        \begin{tabular}{c|c|cc|cc|cc|cc|cc|cc|cc}
        \toprule
        \multirow{2}{*}{ } & \bf{Dataset} & \multicolumn{2}{c|}{\bf{BM-MN}} & \multicolumn{2}{c|}{\bf{BM-MS}} & \multicolumn{2}{c|}{\bf{BM-MT}} & \multicolumn{2}{c|}{\bf{MUTAG}} & \multicolumn{2}{c|}{\bf{MNIST0}} & \multicolumn{2}{c|}{\bf{MNIST1}}  & \multicolumn{2}{c}{\bf{T-Group}} \\ 
         & \bf{Task-level} & \bf{Node} & \bf{Graph}  & \bf{Node} & \bf{Graph} & \bf{Node} & \bf{Graph}  & \bf{Node} & \bf{Graph}  & \bf{Node} & \bf{Graph}  & \bf{Node} & \bf{Graph}  & \bf{Node} & \bf{Graph} \\ 
        \midrule
        \multirow{9}{*}{Node-level} 
        & GCN       & $84.82$ & / & $78.23$ & / & $83.12$ & / & $82.17$ & / & $91.24$  & / & $91.29$  & / & $8.78$ &  /  \\
        & GIN       & $52.80$ & / & $32.44$ & / & $36.98$ & / & $81.91$ & / & $87.62$  & / & $87.33$  & / & $1.65$ &  / \\
        & GraphSAGE & $49.17$ & / & $32.01$ & / & $34.55$ & / & $80.20$ & / & $99.93$ & / & $99.94$ & / & $5.79$ &  / \\
        & SGC       & $51.73$ & / & $31.24$ & / & $36.42$ & / & $34.32$ & / & $82.69$  & / & $82.66$  & / & $3.93$ &  / \\
        & GAT       & $54.33$ & / & $40.83$ & / & $44.23$ & / & $82.44$ & / & $99.40$  & / & $99.90$  & / & $6.56$ &  / \\
        & BernNet   & $58.11$ & / & $38.34$ & / & $42.79$ & / & $72.17$ & / & $99.99$ & / & $99.99$ & / & $13.51$ &  / \\
        & PNA       & $72.16$ & / & $38.32$ & / & $58.97$ & / & $70.16$ & / & $98.48$  & / & $98.50$  & / & $1.03$ &  / \\
        & BWGNN     & $91.85$ & / & $70.10$ & / & $78.53$ & / & $84.33$ & / & $99.99$ & / & $99.99$ & / & $16.30$ &  / \\ \midrule
        \multirow{6}{*}{Graph-level} 
        & OCGIN     & / & $89.40$ & / & $48.80$ & / & $41.14$  & / & $31.02$ & / & $12.99$ & / & $18.09$ & / & $4.46$   \\
        & OCGTL     & / & $76.72$ & / & $46.13$ & / & $41.38$  & / & $33.87$ & / & $9.94$ & / & $11.27$ & / & $4.30$   \\
        & GLocalKD  & / & $7.71$ & / & $9.05$ & / & $17.39$  & / & $23.01$ & / &  $6.96$ & / & $13.49$ & / &  $2.51$   \\
        & iGAD      & / & $68.36$ & / & $74.57$ & / & $84.66$  & / & $91.07$ & / & $94.79$ & / & $97.98$ & / & $5.92$   \\
        & GmapAD   & / & $14.29$ & / & $14.29$ & / & $14.29$ & / & $60.96$ & / & OOM & / & OOM & / & OOM   \\
        & RQGNN     & / & $\bf{99.32}$ & / & $97.60$ & / & $99.36$  & / & $91.27$ & / & $97.62$ & / & $98.39$ & / & $7.98$   \\
         \midrule
        \multirow{2}{*}{Multi-task} 
        & GraphPrompt-U & $43.87$ & $15.15$  & $26.15$  & $14.76$  & $27.78$ & $14.83$  & $70.41$ & $60.70$ & $82.89$ & $36.25$ & $83.30$ & $5.97$ & $1.06$ & $4.25$\\
        & All-in-One-U  & $57.75$ & $8.58$  & $35.75$  & $9.16$  & $25.13$ & $20.23$  & $5.86$ & $33.09$ & OOT & OOT & OOT & OOT & OOT & OOT
\\ 
         \midrule
        \multirow{2}{*}{\begin{tabular}[c]{@{}c@{}}UniGAD\\ (Ours)\end{tabular}} 
        & UniGAD - GCN  & $\bf{99.63}$ & $73.54$ & $\bf{99.91}$ & $\bf{98.39}$ & $\bf{99.73}$ & $\bf{99.99}$ & $86.60$ & $91.66$ & $95.94$ & $94.86$ & $96.38$  & $98.36$ & $21.53$ & $44.95$ \\
        & UniGAD - BWG  & $91.19$ & $23.83$ & $85.81$ & $30.89$ & $84.93$ & $14.74$ & $\bf{87.15}$ & $\bf{92.00}$ & $\bf{99.99}$ & $\bf{97.92}$ & $\bf{99.99}$ & $\bf{98.60}$ & $\bf{31.31}$ & $\bf{55.64}$  \\ 
        
        \bottomrule
        \end{tabular}
        }
    \end{table}

\begin{table}[h!]
    \caption{Zero-shot transferability (AUPRC) at node and edge levels.}
    \label{transNE-A-PRC}
    \centering
    \resizebox{\textwidth}{!}{
    \begin{tabular}{c|cc|cc|cc|cc|cc|cc|cc}
    \toprule
    \multirow{2}{*}{ \bf{Methods} }  & \multicolumn{2}{c|}{\bf{Reddit}} & \multicolumn{2}{c|}{\bf{Weibo}} & \multicolumn{2}{c|}{\bf{Amazon}} & \multicolumn{2}{c|}{\bf{Yelp}} & \multicolumn{2}{c|}{\bf{Tolokers}} & \multicolumn{2}{c|}{\bf{Questions}} & \multicolumn{2}{c}{\bf{T-Finance}} \\ 
      & \bf{N$\rightarrow$E} & \bf{E$\rightarrow$N}  & \bf{N$\rightarrow$E} & \bf{E$\rightarrow$N} & \bf{N$\rightarrow$E} & \bf{E$\rightarrow$N}  & \bf{N$\rightarrow$E} & \bf{E$\rightarrow$N}  & \bf{N$\rightarrow$E} & \bf{E$\rightarrow$N}  & \bf{N$\rightarrow$E} & \bf{E$\rightarrow$N}  & \bf{N$\rightarrow$E} & \bf{E$\rightarrow$N} \\ \midrule
     GraphPrompt-U & $2.80$ & $3.00$ & $7.33$ & $16.50$ & $2.37$ & $9.83$ & $13.69$ & $13.26$ & $34.14$ & $21.59$ & $5.49$ & $3.57$ &OOT &OOT \\
     All-in-One-U  & $2.99$ & $4.15$ & $5.37$ & $6.53$ & $10.20$ & $3.13$ & $14.43$ & $13.49$ & $31.77$ & $21.79$ & $4.11$ & $3.04$ &OOT &OOT \\ 
     \midrule
     UniGAD - GCN & \multicolumn{1}{l}{$\bf{3.89}$} & \multicolumn{1}{l|}{$\bf{5.44}$} & \multicolumn{1}{l}{$\bf{93.35}$} & \multicolumn{1}{l|}{$\bf{95.57}$} & \multicolumn{1}{l}{$10.86$} & \multicolumn{1}{l|}{$29.44$} & \multicolumn{1}{l}{$22.00$} & \multicolumn{1}{l|}{$24.98$} & \multicolumn{1}{l}{$51.50$} & \multicolumn{1}{l|}{$40.51$} & $12.44$ & $\bf{6.76}$ & $69.67$ & $69.74$ \\
    UniGAD - BWG & \multicolumn{1}{l}{$3.11$} & \multicolumn{1}{l|}{$4.09$} & \multicolumn{1}{l}{$86.47$} & \multicolumn{1}{l|}{$93.21$} & \multicolumn{1}{l}{$\bf{28.07}$} & \multicolumn{1}{l|}{$\bf{78.56}$} & \multicolumn{1}{l}{$\bf{35.99}$} & \multicolumn{1}{l|}{$\bf{55.04}$} & \multicolumn{1}{l}{$\bf{54.26}$} & $\bf{46.36}$ & \multicolumn{1}{l}{$\bf{13.89}$} & $5.80$ & $\bf{70.61}$ & $\bf{81.02}$ \\

    \bottomrule
    \end{tabular}
    }
\end{table}
\begin{table}[h]
    \caption{Zero-shot transferability (AUPRC) at node and graph levels.}
    \label{transNG-A-PRC}
    \centering
    \resizebox{\textwidth}{!}{
    \begin{tabular}{c|cc|cc|cc|cc|cc|cc}
    \toprule
    \multirow{2}{*}{  \bf{Methods} }& \multicolumn{2}{c|}{\bf{BM-MN}} & \multicolumn{2}{c|}{\bf{BM-MS}} & \multicolumn{2}{c|}{\bf{BM-MT}} & \multicolumn{2}{c|}{\bf{MUTAG}}   & \multicolumn{2}{c|}{\bf{MNIST1}}  & \multicolumn{2}{c}{\bf{T-Group}} \\ 
     &  \bf{N$\rightarrow$G} & \bf{G$\rightarrow$N}  & \bf{N$\rightarrow$G} & \bf{G$\rightarrow$N} & \bf{N$\rightarrow$G} & \bf{G$\rightarrow$N}  & \bf{N$\rightarrow$G} & \bf{G$\rightarrow$N}  & \bf{N$\rightarrow$G} & \bf{G$\rightarrow$N}  & \bf{N$\rightarrow$G} & \bf{G$\rightarrow$N}   \\ \midrule
     GraphPrompt-U & $13.87$  & $48.25$ & $14.51$ & $26.88$ & $13.17$ & $26.90$ & $\bf{63.44}$  & $46.84$   & $5.95$  & $22.23$&  $5.04$	& $1.15$ \\
      All-in-One-U & $\bf{78.62}$  & $73.54$ & $9.24$ & $32.03$ & $11.94$ & $24.38$ & $28.38$  & $9.60$ & OOT & OOT & OOT  & OOT\\ 
     \midrule
    UniGAD - GCN & \multicolumn{1}{l}{$34.05$} & \multicolumn{1}{l|}{$\bf{86.00}$} & \multicolumn{1}{l}{$\bf{42.41}$} & \multicolumn{1}{l|}{$\bf{81.74}$} & \multicolumn{1}{l}{$\bf{24.26}$} & \multicolumn{1}{l|}{$\bf{60.50}$} & \multicolumn{1}{l}{$40.63$} & \multicolumn{1}{l|}{$\bf{52.67}$}  & $\bf{9.22}$ & $25.59$ & $\bf{35.02}$ & $\bf{6.97}$\\
    UniGAD - BWG & \multicolumn{1}{l}{$21.25$} & \multicolumn{1}{l|}{$54.09$} & \multicolumn{1}{l}{$26.62$} & \multicolumn{1}{l|}{$31.35$} & \multicolumn{1}{l}{$16.86$} & \multicolumn{1}{l|}{$38.62$} & \multicolumn{1}{l}{$38.64$} & \multicolumn{1}{l|}{$27.56$} & \multicolumn{1}{l}{$7.84$} & $\bf{32.94}$ &\multicolumn{1}{l}{$27.15$} & $4.30$  \\ 
    \bottomrule
    \end{tabular}
    }
    \end{table}

\end{document}